\theoremstyle{thmstyleone}%
\newtheorem{theorem}{Theorem}
\newtheorem{proposition}[theorem]{Proposition}%
\theoremstyle{thmstyletwo}%
\newtheorem{example}{Example}%
\theoremstyle{thmstylethree}%
\newcommand{\tr}{\mathcal{T}}
\newcommand{\dl}{\mathcal{D}}
\newcolumntype{M}[1]{>{\centering\arraybackslash}m{#1}}
\begin{document}

\title[Explainable Models via Compression of Tree Ensembles]{Explainable Models via Compression of Tree Ensembles}

\author*[1]{\fnm{Siwen} \sur{Yan}}\email{Siwen.Yan@utdallas.edu}

\author[1]{\fnm{Sriraam} \sur{Natarajan}}\email{Sriraam.Natarajan@utdallas.edu}

\author[2]{\fnm{Saket} \sur{Joshi}}\email{saketjoshi@gmail.com}

\author[3]{\fnm{Roni} \sur{Khardon}}\email{rkhardon@iu.edu}

\author[4]{\fnm{Prasad} \sur{Tadepalli}}\email{tadepall@engr.orst.edu}

\affil*[1]{\orgdiv{Department of Computer Science}, \orgname{University of Texas at Dallas}, \orgaddress{\street{800 W Campbell Rd}, \city{Richardson}, \postcode{75080}, \state{TX}, \country{USA}}}

\affil[2]{\orgname{Amazon}, \country{USA}}

\affil[3]{\orgdiv{Department of Computer Science}, \orgname{Indiana University}, \orgaddress{\street{Luddy Hall, 700 N.\ Woodlawn Avenue}, \city{Bloomington}, \postcode{47408}, \state{IN}, \country{USA}}}

\affil[4]{\orgdiv{Department of Computer Science}, \orgname{Oregon State University}, \orgaddress{\street{1500 SW Jefferson Way}, \city{Corvallis}, \postcode{97331}, \state{OR}, \country{USA}}}

\abstract{Ensemble models (bagging and gradient-boosting) of relational decision trees have proved to be 
one of 
the most effective learning methods in the area of probabilistic logic models (PLMs). While effective, they lose one of the most important aspect of PLMs -- interpretability. 
In this paper we consider the problem of compressing a large set of learned trees into a single explainable model. To this effect, we propose CoTE -- Compression of Tree Ensembles -- that produces a single small decision list as a compressed representation.
CoTE first converts the trees to decision lists and then performs the combination and compression with the aid of the original training set. An experimental evaluation demonstrates the effectiveness of CoTE in several benchmark relational data sets.}

\keywords{Tree Combination, Ensemble, TILDE, First Order Logic}

\maketitle
\pagestyle{plain}
\section{Introduction}

Probabilistic relational logic models (PLMs) provide distributions over logical representations and have been very successful in probabilistic modeling~\citep{srlBook,starAIBook}. In contrast to other successful representations like Neural Networks, a logical representation lends itself to easy interpretation, which is an important factor for the acceptance of ML based solutions in spaces of high accountability (e.g. critical medicine).
This advantage of logical representations breaks down under ensemble models like bagged or boosted logical regression trees, because ensembles are not logical models themselves and have more complex semantics. This is somewhat unfortunate because ensembles often improve performance over single models~\citep{natarajanMLJ12}. We investigate an approach to translate the ensemble of logical trees into an easily interpretable single relational decision list. 

Previous approaches to convert complex  models into simpler ones have largely relied on using the complex model to label its own training data and subsequently train the simple model from it~\citep{craven96}. This approach has evolved into knowledge distillation \citep{hinton2015distilling}, a popular technique in the Neural Network community. With knowledge distillation, however, the focus is on reducing model complexity, not improving interpretability. Furthermore, any model-agnostic method for model compression is unable to exploit model specific features, e.g., the logical semantics of the component trees in an ensemble. In contrast, our approach compresses an ensemble of logical trees into a logical decision list (semantically equivalent to a logical decision tree) and avoids re-training the model. 
A decision list can be considered as an ordered list of first-order logic rules. 
When evaluating a decision list, the first rule satisfied by an instance is used to generate the prediction or probability for the instance.

Another line of work \citep{vidal20} recently showed how tree ensembles can be combined into a single tree without re-training. Their approach, however, is strictly for propositional representations. In contrast, we allow and exploit rich relational structure in our model.

Our incremental compression approach called {\it Compression of Tree Ensembles} (CoTEs) is based on the following key ideas: 

\begin{enumerate}
\item We exploit the similarity of the semantics of relational  trees and decision lists which allows us to soundly translate single tree into a decision list without introducing complicating factors such as negations and existential quantification. 

\item We compose trees incrementally interspersed by subsumption inference, which simplifies the rules as much as possible between successive steps. 

\item We employ the training data to simplify the decision list by pruning rules which are not supported by the data.

\item We speed up the computation by caching results of coverage and subsumption of partial rules and reusing these to evaluate combined rules.  

\end{enumerate}

We evaluate CoTE on several benchmark relational domains on multiple target relations and show that it produces compact decision lists which are close to the original bagged or boosted decision trees in predictive performance and outperforms an approximation technique that is based on relabeling the training data and learning a single tree. 

The rest of the paper is organized as follows: we first review the background and then present the problem of combining logical decision trees. We present the over all algorithm along with two different types of compression methods. Then we evaluate the algorithm empirically before concluding by outlining the areas for future research.

\section{Background and Related Work}

Compressing ensemble models has a long history in classical machine learning. Typical approaches regenerate data and learn a simpler model~\citep{bastani17,zhou16,vandewiele17}.
Other work operates directly on the feature regions~\citep{hara18}, while
\cite{joly12l1} applies Lasso
regularization to set of indicator functions to compress ensemble models.  While functional gradient boosting~\cite{friedman01} based approaches have become popular for relational data~\citep{natarajanMLJ12,KhotMLJ14}, not much progress has been made on compressing these models to interpretable ones. The key reason is that simple propositional model compression approaches cannot be directly applied to relational domains. Since generating pseudo samples for relational data is difficult, \cite{vanassche07} learns a single first order model to approximate the first order ensemble by exploiting the class distributions predicted by the ensemble.

Some work in propositional domains are closely related to our work.
\cite{vidal20} uses dynamic programming to find minimal exact tree representation with post pruning, which takes exponential time in the worst case. Our subsumption reduction also finds an exact model. Our example-based pruning is an alternative approach whose output is bounded by the number of training instances although not guaranteed to generate the minimal tree. 
In addition, our methods preserve numerical scores when combining the trees while other approaches only aim to retain the classification label.
\cite{sirikulviriya11} performs incremental heuristic rule combination where a rule is a path from the root to a leaf. It defines the conditions of redundancy and conflicts to remove rules that are not necessary to constrain prediction over instance space, which has similarity to our subsumption reduction.
\cite{hara18} uses independent rules while we use decision lists as explained next.

\section{Compression of Logical Decision Trees}

\begin{figure}[!t]
\centering
\includegraphics[width=0.95\columnwidth]{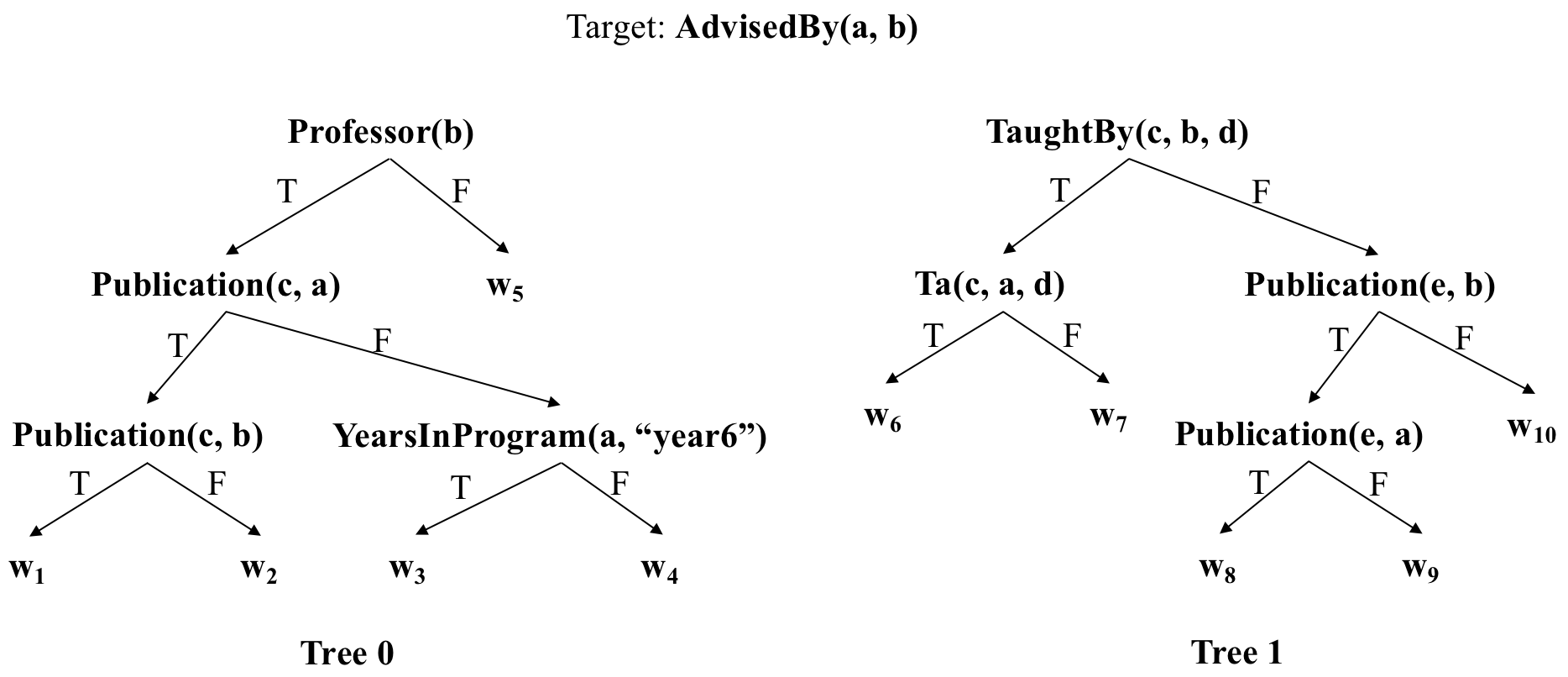}
\caption{Example of relational boosted trees. The target relation is {\bf AdvisedBy($\mathbf{a}, \mathbf{b}$)}. Each node contains a relation/predicate as condition. 
To evaluate a tree, if an 
instance satisfies the relation in the context of the path from the root, then it move to the left branch, otherwise the right branch. 
}
\label{fig:example}
\end{figure}

{\bf Notation:}
We use uppercase letters for constants and  predicate names, and lowercase letters for variables. 
A (logical) \textbf{predicate} is of the form $\mathcal{P}(t_1, \dots, t_k)$ where $\mathcal{P}$ is a predicate and $t_i$ are {\bf arguments} or logical variables. 
A \textbf{grounding} of a predicate with variables $v_1, \dots, v_k$ is a substitution $\{\langle v_1, \dots, v_k \rangle \slash \langle V_1, \dots, V_k \rangle\}$
mapping each of its variables to a constant in the domain of that variable.

\subsection{Problem Setup and Preliminaries} \label{problem-setup}

Given a training set, we use gradient boosting or bagging to generate a set of relational trees~\citep{Tilde}. These TILDE trees include logical variables which are implicitly quantified. 
\begin{example}
\label{ex:combine}
As a running example, consider the two trees presented in Figure~\ref{fig:example}. The goal is to predict if a person $\mathbf{a}$ is {\bf AdvisedBy} $\mathbf{b}$. The predicates used are {\bf Professor($\mathbf{b}$)}: $\mathbf{b}$ is a professor; {\bf Publication($\mathbf{c}, \mathbf{a}$)}: $\mathbf{c}$ is one of $\mathbf{a}$'s publication; {\bf YearsInProgram($\mathbf{a}$, ``year6'')}: $\mathbf{a}$ has been in a program for 6 years; {\bf TaughtBy($\mathbf{c}, \mathbf{b}, \mathbf{d}$)}: a course $\mathbf{c}$ is taught by $\mathbf{b}$ in a university quarter $\mathbf{d}$; {\bf Ta($\mathbf{c}, \mathbf{a}, \mathbf{d}$)}: $\mathbf{a}$ is a teaching assistant of a course $\mathbf{c}$ in a university quarter $\mathbf{d}$. For simplicity of exposition, we will ignore the weights since in the case of boosting, they are combined using the {\em sum} operation while for bagging, they are combined using {\em average}. 
\end{example}

The semantics of TILDE trees are such that at a node one evaluates the set of positive atoms on the path leading to this node as an existential formula. If this is satisfied then we take the true (left) branch and otherwise the false (right) branch. As a result the formula that describes when a leaf in the tree is reached includes the positive conjunction of all atoms exited via their left branch 
and the conjunction of negations of existential formulas, one for each false branch taken on the path to the leaf. 
For example the path formula for leaf $w_2$ of tree 0 in Figure~\ref{fig:example} is 
$[\exists c \; Professor(b) \wedge Publication(c, a)] \wedge [\neg{\exists c \; Professor(b) \wedge Publication(c, a) \wedge Publication(c, b)}]$ 
and 
the path formula for leaf $w_9$ of tree 1 is 
$[\exists e \; Publication(e, b)] 
\wedge 
[\neg{\exists c,d \; TaughtBy(c,b,d) }]
\wedge 
[\neg{\exists e \; Publication(e, b) \wedge Publication(e, a)}]$. 
The negated portions imply that the path formulas are mutually exclusive, but they have a complicated form. As we argue below, a relational decision list representation would naturally avoid the complexity of negative predicates in the path formulas, since each rule is applied only when the previous rules are inapplicable. 
We can now define our task:

\smallskip
\noindent \fbox{
\parbox{0.97\columnwidth}{
\noindent {\bf Given}: A set of $m$ learned logical decision trees $\langle \tr_0...\tr_{m-1}\rangle$. \\
\noindent{\bf Task:} Return a {\em compressed}  representation that is easily explainable and that {\em captures} the ensemble model.} 
}
\smallskip

Our compressed representation is a decision list $\dl$ where each rule in the list is a Horn clause i.e., it has only positive atoms in its body. 
Such representations have a long history in AI and cognitive science ~\citep{KlahrLaNe86,quinlan1987generating,newell90,laird12}.
We consider two alternatives for {\em captures}: in the first, the two representations must be logically equivalent. In the second, we only require the representations to have the same predictions on the original training set that produced  the ensemble.

Before presenting our algorithm consider a naive approach to the combination of propositional trees. In this case one can take a cross product of all paths in all trees to generate either one large tree (by hanging tree 1 at each leaf of tree 0, etc for the remaining trees), or a large set of rules that represent these mutually exclusive paths. This approach has two major problems. The first is that the resulting size is exponentially large. This cannot be avoided in the worst case, and in fact it is easy to show that the combination problem is co-NP hard (see \cite{vidal20}). The second problem, for the relational case, is that due to first order  quantification testing example coverage and testing of implication between rules is computationally hard. 

Our algorithm,
 \underline{C}ompression \underline{o}f (relational) \underline{T}ree \underline{E}nsembles (CoTE),
mitigates these difficulties through three high-level steps: (1) conversion of the trees to a set of decision lists with only positive atoms, (2) preprocessing the decision lists and caching subsumption and coverage results to speed up the computation, and (3) incremental combination of the trees. 
In addition, by using example based pruning we guarantee that the output size is always bounded, leading to significant compression and additional speedup.

\subsection{Conversion to Positive Decision Lists}

The first step is based on the following proposition: 
\begin{proposition}
A decision list with paths from a TILDE tree given in lexicographical ordering (where True branches are explored before False branches) and where path formulas drop all negative portions is logically equivalent to the original tree.
\end{proposition}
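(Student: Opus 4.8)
The plan is to prove the equivalence pointwise: for an arbitrary instance $x$, I would show that the leaf $x$ reaches in the original TILDE tree $\tr$ is exactly the leaf whose rule fires first in the derived decision list $\dl$. Since a decision list returns the value of the first rule whose body is satisfied and the tree returns the value at the reached leaf, establishing this identity for every $x$ (i.e.\ for every interpretation, which is what ``logically equivalent'' amounts to for these input-to-value functions) gives the claim. I would split the argument into two sub-claims about the leaf $\ell^\star$ that $x$ reaches: (i) $x$ satisfies the positive body of the rule for $\ell^\star$, and (ii) $x$ satisfies no rule preceding $\ell^\star$ in the lexicographic (True-before-False) ordering.

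For sub-claim (i), I would lean on the accumulation semantics described above. Taking the left branch at a node certifies that the existential conjunction of the left-branch atoms accumulated up to and including that node is satisfiable, whereas taking the right branch leaves the accumulated positive conjunction unchanged, contributing only a negated existential. Hence the \emph{deepest} left-branch test along the path of $x$ already witnesses satisfiability of the conjunction of \emph{all} left-branch atoms (any right branches taken afterward add only negations, which are precisely what we drop). Since the positive body of $\ell^\star$ is exactly this existential conjunction of left-branch atoms, $x$ satisfies it; the all-False leaf degenerates to the empty conjunction $\mathit{true}$, so the default rule is handled as well.

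For sub-claim (ii), I would exploit the structure of the ordering itself. If a leaf $\ell'$ precedes $\ell^\star$, then their paths first diverge at a node $D$ where $\ell'$ takes the True branch while $\ell^\star$ takes the False branch --- this is exactly what True-before-False lexicographic order means. Because $x$ reaches $\ell^\star$, it takes the False branch at $D$, so the existential test associated with $D$ (the left-branch atoms accumulated on the shared prefix, conjoined with $D$'s own atom) fails for $x$. But $\ell'$'s positive body is a conjunction containing all those shared-prefix left-branch atoms together with $D$'s atom, possibly with further conjuncts below $D$, under an existential over a superset of the variables. By monotonicity of satisfiability --- adding conjuncts or extra existential variables can only shrink the set of satisfying groundings --- the failure of the smaller formula forces the failure of $\ell'$'s body on $x$, so no earlier rule fires. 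Together with (i) this yields $\dl(x)=\tr(x)$.

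I expect the main obstacle to be the careful bookkeeping of existential quantification in the relational setting, which is exactly where the argument departs from the trivial propositional case. Two points need the most care: confirming in (i) that interspersed right branches genuinely leave the accumulated positive atoms and their shared variables untouched, so that the deepest-left-node witness really certifies the full positive body; and making the monotonicity step in (ii) rigorous when variables introduced below $D$ are existentially shared with the prefix. Once these are pinned down, the proof also makes transparent \emph{why} dropping negations is sound: each negated existential in a path formula excludes precisely those instances that a lexicographically earlier, and therefore higher-priority, rule already captures.
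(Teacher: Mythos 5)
Your proposal is correct and takes essentially the same route as the paper's own proof: your sub-claims (i) and (ii) are precisely the paper's second case (the rule of the tree path an instance follows is the first rule to fire), with your divergence-node and monotonicity argument making explicit the paper's observation that every earlier positive rule's body contains a superset of some negated existential $\exists\, neg(v_1, v_{2i})$ from the active path. The only structural difference is that you prove just this one direction and correctly note that, applied pointwise to every instance, it already yields equivalence, whereas the paper additionally argues the converse (an active list rule implies the corresponding tree path is satisfied) by contradiction.
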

\begin{proof}
First translate the decision tree $\tr$ into a decision list $\dl$ with negations, where rules are ordered by their lexicographical (left child before right child) ordering. Since paths are mutually exclusive the result is equivalent to the tree. As stated in Section~\ref{problem-setup}, every path in the $\tr$ is equivalent to a formula of the form $[\exists pos(v_1)] \wedge [\wedge_i \neg \exists neg(v_1, v_{2i})]$ where $pos()$ and $neg()$ are conjunctions of atoms. Our positive $\dl$ drops the negated portions from the rules.

Consider case where $\dl$ rule is active, i.e., it is satisfied and no rule above it is satisfied. We claim that the corresponding path in the tree is satisfied. First note that the positive portions are identical. Now, by way of contradiction, suppose that one of the negative portions $\exists neg(v_1, v_{2i})$ is true. In that case one of the earlier branches in the tree, whose positive portion is exactly $neg(v_1, v_{2i})$ is satisfied.
This implies that one of the earlier rules in the positive $\dl$ is satisfied.

Next consider case where a tree path is active. In that case, the positive portion of the corresponding $\dl$ rule is satisfied, so we only need to show that no rules above it is active. To see this, note that each positive $\dl$ rule above this rule includes a superset of some $\exists neg(v_1, v_{2i})$ from this rule.
\end{proof}

\begin{example}
Returning to the example, the decision list $\dl_0$ corresponding to the first tree $\tr_0$ is as follows: 
\begin{align*}
\nonumber w_1: AdvisedBy(a, b) & :- \, Professor(b) \land Publication(c, a) \land Publication(c, b)  \\
\nonumber w_2: AdvisedBy(a, b) & :- \, Professor(b) \land Publication(c, a)  \\
\nonumber w_3: AdvisedBy(a, b) &:- \, Professor(b) \land YearsInProgram(a, ``year6'')  \\
\nonumber w_4: AdvisedBy(a, b) &:- \, Professor(b) \\
\nonumber w_5: AdvisedBy(a, b) &:-
\end{align*}
Similarly, the second tree $\tr_1$ will have five clauses in the decision list $\dl_1$ corresponding to the five paths. 
This produces identical values to the original tree on all possible examples.
\end{example}
As a result, our algorithm  uses simple subsumption between sets of positive atoms \citep{stickel92,plotkin1970note,de1997theta}
to identify rule coverage and implications between different rules.
Once the decision list $\dl_i$ is created for each tree $\tr_i$, we can try to compress the individual clauses 
using a greedy algorithm that removes one atom at a time and checks for self-subsumption.
We note that in our experiments, such a reduction is generally not necessary due to the fact that the learned trees are typically short and do not have many redundancies. After the preprocessing, subsumption is also used to compress combined decision lists.

\subsection{Caching Results for Speedup} \label{caching-speed}

As noted above, subsumption is NP-complete. Hence we need to ensure that the clauses are as short as possible and it is wise to avoid repeated subsumption/coverage tests. Our caching ideas are based on the following proposition: 
\begin{proposition}
\label{prop:groups}
If a rule has groups of predicates that do not share variables except for variables from the head (target predicate) then the evaluation of the groups can be done separately and combined.
\end{proposition}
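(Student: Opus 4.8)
The plan is to reduce the statement to the elementary logical fact that an existential quantifier distributes over a conjunction whenever the quantified variables are partitioned among the conjuncts. First I would fix terminology: write the rule as a Horn clause with head atom $H(\bar{x})$ and body $B$, i.e. $H(\bar{x}) \leftarrow B$, where $\bar{x}$ is the tuple of head (target) variables and $B = \bigwedge_{g} B_g$ is the conjunction of positive atoms, grouped as in the hypothesis. Every variable other than those in $\bar{x}$ is implicitly existentially quantified, so evaluating the rule on a ground example amounts to binding $\bar{x}$ to the example's constants $\bar{X}$ and deciding whether $\exists \bar{y}\; B[\bar{x}/\bar{X}]$ holds, where $\bar{y}$ collects all non-head variables of the body.

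The first substantive step is to observe that the grouping hypothesis forces the non-head variables to split cleanly. Since any two groups share only head variables, no non-head variable can occur in two different groups; hence $\bar{y}$ is a disjoint union $\bar{y} = \bar{y}_1 \uplus \cdots \uplus \bar{y}_n$, where $\bar{y}_g$ are the non-head variables of $B_g$, and each $B_g$ mentions only $\bar{x}$ and $\bar{y}_g$. I would then invoke the quantifier identity
\[
\exists \bar{y}\; \bigwedge_g B_g[\bar{x}/\bar{X}] \;\equiv\; \bigwedge_g \Bigl(\exists \bar{y}_g\; B_g[\bar{x}/\bar{X}]\Bigr),
\]
which says precisely that, with the head constants substituted in, each group can be evaluated on its own against the data and that the per-group boolean answers are combined by conjunction.

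To justify the identity I would argue both directions of the equivalence at the level of witnessing assignments. The forward direction is immediate: any assignment to $\bar{y}$ satisfying the full body restricts, group by group, to an assignment of $\bar{y}_g$ satisfying $B_g$. The substantive direction is the converse, and this is where I expect the only subtlety to lie: given a separate satisfying assignment $\sigma_g$ for each group, I must reassemble them into a single global assignment. This is exactly where the disjointness of the $\bar{y}_g$ is used --- the $\sigma_g$ fix values on pairwise disjoint sets of variables and all agree on $\bar{x}$ (held equal to $\bar{X}$), so their union is a well-defined, conflict-free assignment that satisfies every $B_g$ simultaneously, hence $B$. I would emphasize that without the shared-variable restriction the union need not be consistent, so the hypothesis is essential rather than cosmetic. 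Reading the established equivalence operationally then yields the claim: coverage of an example by the rule is the logical AND of the independently computed coverages of its groups, so the groups' evaluations --- and, for caching purposes, their intermediate subsumption and coverage results --- may be computed separately and combined.
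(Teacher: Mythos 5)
Your proof is correct and follows essentially the same route as the paper's: fix the head arguments to constants, note that the hypothesis makes the non-head (existential) variables partition across groups, and conclude that the body is satisfied iff every group is satisfied. The paper compresses this into the single phrase ``due to disjointness''; your witness-assignment argument for both directions (restriction one way, conflict-free union of per-group witnesses the other) is just an explicit spelling-out of that same step.
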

\begin{proof}
Fix arguments at the head of the rule to be constants and divide the predicates in the body (clause) into maximal connected subclauses (cf. connected subgraphs), where two predicates are connected if they have a variable (not constant) in common. Then due to disjointness the body is satisfied if and only if all subclauses are satisfied. 
\end{proof}
We call these {\em predicate groups} and use this proposition in two ways. First, whenever possible each clause is divided into predicate groups.  Second, a combined rule from two decision lists that are standardized apart is naturally separated into groups. With this in mind, we preprocess the decision lists to generate the following data structures that cache subsumption/example coverage results:
\begin{enumerate}
 \item {\bf PG} includes all predicate (sub)groups that appear in all clauses in all decision lists. (Algorithm~\ref{algo:prep})
 \item {\bf Subsumption:} A matrix $SM$
 is created to store the subsumption results between every pair of predicate groups in $PG$. (Algorithm~\ref{algo:subsume})
 \item {\bf Example Coverage:} 
 The set of examples covered by each predicate group in $PG$ and each clause is stored in $ES$. (Algorithm~\ref{algo:prune})
\end{enumerate}
The propositions above imply that, for SCoTE in Section~\ref{sec-scote}, subsumption between rules (in this case, combined rules) can be directly resolved through the relationship of their subparts (predicate groups) which is stored in $SM$. 

Caching allows us to actually compute subsumption relationship between predicate groups or training examples covered by each predicate group {\em only once}. Both operations are very time-consuming. The cached results are used for checking redundancy during the incremental compression.

\subsection{Incremental Merging of Lists}

The naive algorithm discussed above is indeed naive. Instead of taking a product of all trees in advance and then compressing the results one can perform the operation incrementally. 
That is we start with tree 0, combine it with tree 1 and compress the results, then continue with tree 2 etc. 
Therefore, all we need is a correct and effective procedure to combine two decision lists.

One question that arises when combining decision lists is the ordering of the combined rules.
This is in contrast with combining
sets of mutually exclusive rules where the ordering does not matter. 
To resolve this we observe the following:
\begin{proposition}
The final ordering must preserve the partial order from the original lists but the ordering of ``noncomparable'' pairs of rules is not important.
\end{proposition}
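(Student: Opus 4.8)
The plan is to make the merge semantics precise and then reduce the whole statement to one order-theoretic observation. First I would fix what a combined rule is: a pair $(i,j)$ built from rule $i$ of $\dl_0$ and rule $j$ of $\dl_1$, whose body is the standardized-apart conjunction $body_0(i)\wedge body_1(j)$ and whose score is the commutative combination (sum for boosting, average for bagging) of the two component scores. Since the combining operation is commutative and depends only on the indices $i,j$ (not on the position of the rule in the merged list), the score carried by any combined rule is already independent of the ordering; hence the entire content of the proposition concerns \emph{which} combined rule fires for each example. I would then introduce the product partial order inherited from the two lists, writing $(i,j)\preceq(i',j')$ exactly when $i\preceq i'$ in $\dl_0$ and $j\preceq j'$ in $\dl_1$, and calling two pairs noncomparable when neither dominates the other.

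Next I would pin down the correctness target. For an example $x$, let $S_0=\{i: x\models body_0(i)\}$ and $S_1=\{j: x\models body_1(j)\}$; because each list is evaluated top-to-bottom and is exhaustive (a final catch-all rule makes $S_0,S_1\neq\emptyset$), the rule that fires in $\dl_0$ is $i^{\ast}=\min S_0$ and the rule that fires in $\dl_1$ is $j^{\ast}=\min S_1$. The merged list is correct on $x$ precisely when $(i^{\ast},j^{\ast})$ is the first \emph{satisfied} combined rule, and the satisfied combined rules are exactly the pairs in $S_0\times S_1$.

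The crux, which I would lead with, is the observation that $(i^{\ast},j^{\ast})$ is the least element of $S_0\times S_1$ under $\preceq$: for every $(i,j)\in S_0\times S_1$ minimality gives $i^{\ast}\preceq i$ and $j^{\ast}\preceq j$, hence $(i^{\ast},j^{\ast})\preceq(i,j)$. Sufficiency is then immediate, since any linear extension of $\preceq$ ranks the minimum of $S_0\times S_1$ ahead of every other element of $S_0\times S_1$, so $(i^{\ast},j^{\ast})$ fires no matter how the noncomparable pairs are interleaved. This is exactly the claim that the ordering of noncomparable pairs is irrelevant, and it also dissolves the apparent tension that a single $x$ may satisfy, say, $(2,3)$ and $(3,2)$ simultaneously: in that case $(2,2)\in S_0\times S_1$ sits $\preceq$-below both and preempts them.

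For the necessity half I would show that violating $\preceq$ can flip a prediction: if a candidate ordering places $(i',j')$ before $(i,j)$ despite $(i,j)\prec(i',j')$, I would exhibit an $x$ with $\min S_0=i$, $\min S_1=j$ that also lies in the coverage of $body_0(i')\wedge body_1(j')$, so that $(i,j)$ is the correct firing rule while the misplaced $(i',j')$ fires instead. The main obstacle lives precisely here: such a witness exists only when the four bodies are jointly satisfiable. I would therefore either state necessity as ``the order must be respected whenever the comparable pair is realizable,'' or invoke the subsumption structure inherited from the trees (within each list a more specific rule precedes a more general one, so its body implies the later body) to guarantee realizable witnesses for the comparabilities that actually arise. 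I expect this realizability bookkeeping, rather than the order-theoretic core, to be the delicate part of a fully rigorous write-up.
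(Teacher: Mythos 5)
Your proof is correct and, at its core, rests on the same fact as the paper's, but you package it more generally and more completely. The paper argues pairwise: if an example satisfies both $\mathcal{A}1\wedge\mathcal{B}2$ and $\mathcal{A}2\wedge\mathcal{B}1$, then by disjointness of the standardized-apart lists it also satisfies $\mathcal{A}1\wedge\mathcal{B}1$, which lies above both, so the relative order of the two noncomparable rules cannot affect the prediction. Your formulation --- the satisfied combined rules form a product $S_0\times S_1$, which has a $\preceq$-least element $(i^{\ast},j^{\ast})$, and every linear extension of the product order ranks that element first among satisfied rules --- is the same disjointness argument carried out uniformly for all pairs at once rather than by a two-pair exchange; it is cleaner and gives the ``noncomparable order is irrelevant'' conclusion in one step. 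You also go beyond the paper on the necessity half (``must preserve the partial order''), which the paper states but never actually proves; your caveat that a violating witness requires joint satisfiability (and, one should add, distinct scores for the two rules involved) is exactly the right qualification, so stating necessity relative to realizable examples is the honest form of the claim. The one step you should not leave implicit is the factorization claim itself: that an example satisfies $body_0(i)\wedge body_1(j)$ iff it satisfies each conjunct separately is not a matter of definition, because the bodies are existentially quantified conjunctions; it holds precisely because the two bodies share no variables once the head variables are grounded, which is the paper's Proposition~2 on predicate groups. That is where the standardized-apart hypothesis does its real work, so cite it or reprove it rather than folding it into the definition of the merge.
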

\begin{proof}
Consider two positive DLs which are standardized apart, ie. their logical variables are disjoint and consider two pairs of rules $\mathcal{A}1>\mathcal{A}2$ ($\mathcal{A}1$ above $\mathcal{A}2$) and $\mathcal{B}1>\mathcal{B}2$ on these lists. If an example satisfies both $\mathcal{A}1\wedge \mathcal{B}2$ and $\mathcal{A}2\wedge \mathcal{B}1$, then due to disjointness and 
Proposition~\ref{prop:groups}
the same example also satisfies $\mathcal{A}1\wedge \mathcal{B}1$. Therefore this example will be active for $\mathcal{A}1\wedge \mathcal{B}1$ or some rule higher in the combined list and the order of $\mathcal{A}1\wedge \mathcal{B}2$ and $\mathcal{A}2\wedge \mathcal{B}1$ will not affect the prediction on this example.
\end{proof}
For example from lists  [$\mathcal{A}1, \mathcal{A}2$] and [$\mathcal{B}1, \mathcal{B}2$] the combination ($\mathcal{A}1+\mathcal{B}1$) must be first and ($\mathcal{A}2+\mathcal{B}2$) must be last but the ordering between ($\mathcal{A}1+\mathcal{B}2$) and ($\mathcal{A}2+\mathcal{B}1$) is not important. The result will be logically equivalent regardless of this ordering.
We can therefore explore this cross product in a lexicographical manner. 
As the following example shows, even a combination of two decision lists can yield a large results. The next two subsections show the result can be compressed during construction.

\begin{example}
Returning to the example, the combined decision list $\dl_{01}$ which contains 25 combinations, corresponding to the cross-product of $\dl_0$ and $\dl_1$ is as follows for boosting (mean of values for bagging):
\begin{align*}
\nonumber w_1 + w_6: AdvisedBy(a, b) & :- \, Professor(b) \land Publication(c, a) \land Publication(c, b) \\
& \:\:\:\:\:\:\: \land TaughtBy(f, b, d) \land Ta(f, a, d)  \\
\nonumber w_1 + w_7: AdvisedBy(a, b) & :- \, Professor(b) \land Publication(c, a) \land Publication(c, b) \\
& \:\:\:\:\:\:\: \land TaughtBy(f, b, d) \\
\nonumber w_1 + w_8: AdvisedBy(a, b) &:- \, Professor(b) \land Publication(c, a) \land Publication(c, b) \\
& \:\:\:\:\:\:\: \land Publication(e, b) \land Publication(e, a) \\
\nonumber w_1 + w_9: AdvisedBy(a, b) &:- \, Professor(b) \land Publication(c, a) \land Publication(c, b)  \\
& \:\:\:\:\:\:\: \land Publication(e, b) \\
\nonumber w_1 + w_{10}: AdvisedBy(a, b) &:- \, Professor(b) \land Publication(c, a) \land Publication(c, b)  \\
\nonumber w_2 + w_6: AdvisedBy(a, b) & :- \, Professor(b) \land Publication(c, a)  \\
& \:\:\:\:\:\:\: \land TaughtBy(f, b, d) \land Ta(f, a, d)  \\
\nonumber w_2 + w_7: AdvisedBy(a, b) & :- \, Professor(b) \land Publication(c, a) \land TaughtBy(f, b, d) \\
\nonumber w_2 + w_8: AdvisedBy(a, b) &:- \, Professor(b) \land Publication(c, a) \land Publication(e, b) \\
& \:\:\:\:\:\:\: \land Publication(e, a) \\
\nonumber w_2 + w_9: AdvisedBy(a, b) &:- \, Professor(b) \land Publication(c, a) \land Publication(e, b) \\
\nonumber w_2 + w_{10}: AdvisedBy(a, b) &:- \, Professor(b) \land Publication(c, a) \\
\nonumber w_3 + w_6: AdvisedBy(a, b) & :- \, Professor(b) \land YearsInProgram(a, ``year6'') \\
& \:\:\:\:\:\:\: \land TaughtBy(f, b, d) \land Ta(f, a, d) \\
\nonumber w_3 + w_7: AdvisedBy(a, b) & :- \, Professor(b) \land YearsInProgram(a, ``year6'')  \\
& \:\:\:\:\:\:\: \land TaughtBy(f, b, d) \\
\nonumber w_3 + w_8: AdvisedBy(a, b) & :- \, Professor(b) \land YearsInProgram(a, ``year6'')  \\
& \:\:\:\:\:\:\: \land Publication(e, b) \land Publication(e, a) \\
\nonumber w_3 + w_9: AdvisedBy(a, b) & :- \, Professor(b) \land YearsInProgram(a, ``year6'')  \\
& \:\:\:\:\:\:\: \land Publication(e, b) \\
\nonumber w_3 + w_{10}: AdvisedBy(a, b) & :- \, Professor(b) \land YearsInProgram(a, ``year6'') \\
\nonumber w_4 + w_6: AdvisedBy(a, b) &:- \, Professor(b) \land TaughtBy(f, b, d) \land Ta(f, a, d) \\
\nonumber w_4 + w_7: AdvisedBy(a, b) &:- \, Professor(b) \land TaughtBy(f, b, d) \\
\nonumber w_4 + w_8: AdvisedBy(a, b) &:- \, Professor(b) \land Publication(e, b) \land Publication(e, a) \\
\nonumber w_4 + w_9: AdvisedBy(a, b) &:- \, Professor(b) \land Publication(e, b) \\
\nonumber w_4 + w_{10}: AdvisedBy(a, b) &:- \, Professor(b) \\
\nonumber w_5 + w_6: AdvisedBy(a, b) &:- \, TaughtBy(f, b, d) \land Ta(f, a, d) \\
\nonumber w_5 + w_7: AdvisedBy(a, b) &:- \, TaughtBy(f, b, d) \\
\nonumber w_5 + w_8: AdvisedBy(a, b) &:- \, Publication(e, b) \land Publication(e, a) \\
\nonumber w_5 + w_9: AdvisedBy(a, b) &:- \, Publication(e, b) \\
\nonumber w_5 + w_{10}: AdvisedBy(a, b) &:- 
\end{align*}
\end{example}

\subsection{Subsumption based Compression} \label{sec-scote}

For the algorithm in this section, our goal is to compress the decision list but maintain logical equivalence. This relies on subsumption tests. 
In particular, compression is used in two places. The first compression is within a rule where we can remove individual predicates (during preprocessing) or subgroups of predicates during the combination of rules. For instance, in the above example, the ($w_1 + w_8$) clause 
can drop $Publication(e, a)$ and $Publication(e, b)$ and
be reduced to (similar to the ($w_1 + w_9$) clause)
\begin{align}
    w_1 + w_8: AdvisedBy(a, b) &:- \, Professor(b) \land Publication(c, a) \nonumber \\
    & \:\:\:\:\:\:\: \land Publication(c, b).
\end{align}
The second compression is removal of rules which are subsumed by rules that appear above them in the list. 
As mentioned above all these decisions can be done directly using $SM$ without rerunning subsumption tests.
For instance, the clause in Equation 1 subsumes the ($w_1 + w_{10}$) clause (as well as the ($w_1 + w_9$) clause) from Example 3. So now the subsumption reduction is applied and the decision list will be simplified to (we show only rules with $w_1$ for brevity)
\begin{flalign*}
\nonumber w_1 + w_6: AdvisedBy(a, b) & :- \, Professor(b) \land Publication(c, a)  \\
& \:\:\:\:\:\:\: \land Publication(c, b) \land TaughtBy(f, b, d) \\
& \:\:\:\:\:\:\: \land Ta(f, a, d)  \\
\nonumber w_1 + w_7: AdvisedBy(a, b) & :- \, Professor(b) \land Publication(c, a)  \\
& \:\:\:\:\:\:\: \land Publication(c, b) \land TaughtBy(f, b, d) \\
\nonumber w_1 + w_8: AdvisedBy(a, b) &:- \, Professor(b) \land Publication(c, a)  \\
& \:\:\:\:\:\:\: \land Publication(c, b) \\
& ......
\end{flalign*}

We refer to this algorithm that uses subsumption for reduction as {\sc SCoTE}.
As shown in our experiments this algorithm produces a logically equivalent expression that provides a significant compression relative to the naive algorithm. 

\subsection{Example based Compression} \label{sec-ecote}

While subsumption often produces small lists this is not always the case, and we know that in the worst case with arbitrary trees the results can be of exponential size. However, recall that we are generating a decision list 
from the original given dataset. If the dataset has $N$ examples, then the final representation should not have more than $N$ distinctions, $N$ rules in cases of a list and $N$ leaves if we were using a tree. If the intermediate representation has more rules, then the cases they cover have not been validated by the data. They either correspond to situations that do not naturally arise in the data, or their prediction has not been validated and is hence not trustworthy. Although we are preforming a logical transformation of the ensemble,
our idea is to: {\em prune rules in the decision list which do not cover any example in the original dataset}. 
If the original ensemble has good predictions they are likely from the retained rules so we expect to retain predictive accuracy while avoiding excessive size. 

The incremental algorithm works in exactly the same manner as in {\sc SCoTE}. But the decisions on compression are based on the coverage of examples instead of subsumption between rules.
To distinguish from {\sc SCoTE}, we refer to this algorithm as {\sc ECoTE}.
While  {\sc SCoTE} preserves {logical equivalence} with the learned ensemble, 
{\sc ECoTE} 
guarantees that training {examples coverage} is the same as the original model. 

{\sc ECoTE} can reduce rules further than {\sc SCoTE}. For example, in our training data, any person who is a professor has some publication. Then removing $Publication(e, b)$ does not affect example coverage of the ($w_2 + w_9$) clause. Hence we get
\begin{flalign*}
\nonumber w_2 + w_9: AdvisedBy(a, b) & :- \, Professor(b) \land Publication(c, a)
\end{flalign*}

{\sc ECoTE} also detects combined rules with empty coverage relative to the dataset. Such rules are needed to maintain logical equivalence but not relative to our dataset.
For example, professors publish papers with students in departments and teach courses. The students publish papers with a professor always have been a teaching assistant in at least one of the professor's course. Under this situation, the ($w_1 + w_6$) clause covers all positive examples that the ($w_1 + w_7$) clause can cover. If an example fails the ($w_1 + w_6$) clause, it will also fails the ($w_1 + w_7$) clause. Hence the example coverage of the clause ($w_1 + w_7$) is empty and the clause can be removed.

\subsection{Algorithm}
\begin{algorithm}[!t]
\caption{CoTE: \underline{\textbf{C}}\underline{\textbf{o}}mpression of \underline{\textbf{T}}ree \underline{\textbf{E}}nsembles}
\label{algo:main}
\small
\begin{algorithmic}[1]
\Procedure{CoTE}{Ensemble model $\mathcal{M}$}
\State $\mathcal{L}$, $PG$ $\leftarrow$ PREP($\mathcal{M}$) \Comment{See Alg \ref{algo:prep}}
\If{logic-equivalent}
\State $L$ $\leftarrow$ SR($\mathcal{L}$, $PG$) \Comment{See Alg \ref{algo:subsume}}
\ElsIf{data-equivalent}
\State $L$ $\leftarrow$ EP($\mathcal{L}$, $PG$) \Comment{See Alg \ref{algo:prune}}
\EndIf
\State \textbf{return} {$L$}
\EndProcedure
\end{algorithmic}
\end{algorithm}

Combining all the observations above we get our promised compression algorithms. 
Algorithm~\ref{algo:main} presents the {\em Compression of Tree Ensembles} (CoTE) algorithm. 
Preprocessing into decision lists and predicates groups is performed in Algorithm~\ref{algo:prep}.
\begin{algorithm}[!t]
\caption{SR: \underline{\textbf{S}}ubsumption \underline{\textbf{R}}eduction}
\label{algo:subsume}
\small
\begin{algorithmic}[1]
\Procedure{SR}{Decision lists $\mathcal{L}$, Predicate groups $PG$}
\State $M$ $\leftarrow$  size($PG$)
\For{$i \in 0$ \textbf{to} $M-1$}
  \For{$j \in 0$ \textbf{to} $M-1$}
    \State ${SM}_{i,j}$ $\leftarrow$ isSubsume(${PG}_{i}$, ${PG}_{j}$)
  \EndFor
\EndFor

\State $N$ $\leftarrow$  size($\mathcal{L}$)
\State $L \leftarrow {\mathcal{L}}_{0}$
\For{iter $i \in 1$ \textbf{to} $N-1$}
  \State ${L}^{*} \leftarrow \emptyset$
  \For{each ${\mathtt{C}}_{j}$ $\in$ $L$}
    \For{each ${\mathtt{C}}_{k}$ $\in$ ${\mathcal{L}}_{i}$}
      \State $\mathtt{C}$ $\leftarrow$ addClauses(${\mathtt{C}}_{j}$, ${\mathtt{C}}_{k}$)
      \State ${L}^{*}$ $\leftarrow$ append(${L}^{*}$, reduceClause($\mathtt{C}$, $SM$)) \Comment{In clause}
    \EndFor
  \EndFor
  \State $L$ $\leftarrow$ reduceList(${L}^{*}$, $SM$) \Comment{Between clauses}
\EndFor
\State \textbf{return} $L$
\EndProcedure
\end{algorithmic}
\end{algorithm}
Algorithm~\ref{algo:subsume} describes {\sc SCoTE}.
Lines 3-7 build a matrix to store the subsumption relationship between any pair of predicate groups. 
In line 15, the combined clause is shortened by checking subsumption relationship between its predicate groups before it is appended to our decision list. 
In line 18, clauses that are subsumed by any clause higher in the decision list are removed. 
\begin{algorithm}[!t]
\caption{EP: \underline{\textbf{E}}xample based \underline{\textbf{P}}runing}
\label{algo:prune}
\small
\begin{algorithmic}[1]

\Procedure{EP}{Decision lists $\mathcal{L}$, Predicate groups $PG$}
\State $ES \leftarrow \emptyset$  \Comment{Examples coverage of $PG$}
\State $N$ $\leftarrow$  size($\mathcal{L}$)
\For{$i \in 0$ \textbf{to} $N-1$}
  \State $ES$ $\leftarrow$ append($ES$, expCover($D_{train}$, ${\mathcal{L}}_{i}$, $PG$))
\EndFor

\State $L \leftarrow {\mathcal{L}}_{0}$
\For{iter $i \in 1$ \textbf{to} $N-1$}
  \State ${L}^{*} \leftarrow \emptyset$
  \State $EC \leftarrow \emptyset$  \Comment{Examples coverage}
  \For{each ${\mathtt{C}}_{j}$ $\in$ $L$}
    \For{each ${\mathtt{C}}_{k}$ $\in$ ${\mathcal{L}}_{i}$}
      \State $\mathtt{C}$ $\leftarrow$ addClauses(${\mathtt{C}}_{j}$, ${\mathtt{C}}_{k}$)
      \State $EC_{\mathtt{C}} \leftarrow$ checkCoverage($\mathtt{C}$, $ES$, $EC$) \Comment{Between clauses}
      \If{$EC_{\mathtt{C}} \neq \emptyset$} 
        \State ${L}^{*}$ $\leftarrow$ append(${L}^{*}$, pruneClause($\mathtt{C}$, $ES$, $EC_{\mathtt{C}}$, $EC$))  \Comment{In clause}
      \EndIf
    \EndFor
  \EndFor
    \State $L$ $\leftarrow {L}^{*}$
\EndFor
\State \textbf{return} $L$

\EndProcedure
\end{algorithmic}
\end{algorithm}
Algorithm~\ref{algo:prune}  describes {\sc ECoTE}.
In line 5, the algorithm generates example coverage of not only each clause in each decision list, but also each predicate group in each clause. 
The example coverage of each clause is later used in line 14 to remove combined clauses with empty intersection of example coverage while also updates the overall example coverage $EC$ and record the example coverage of the clause $\mathtt{C}$ as ${EC}_{\mathtt{C}}$. 
Similarly, example coverage of predicate groups helps check whether removing a predicate group will change the example coverage of the combined clause $\mathtt{C}$ in line 13.
$ES$, ${EC}_{\mathtt{C}}$ and $EC$ are used in line 16 to prune the clause $\mathtt{C}$ without changing the example coverage of $\mathtt{C}$ before appending this clause to ${L}^{*}$. 
Algorithm~\ref{algo:prep} shows the preprocessing procedure which is common for both subsumption reduction and example based pruning. This procedure transforms the trees to a decision list, iterates through each rule, reduces it and constructs the groups of literals. 
\begin{algorithm}
\caption{Preprocess the ensemble model} \label{algo:prep}

\begin{algorithmic}[1]

\Procedure{PREP}{Ensemble model $\mathcal{M}$}
\State $N$ $\leftarrow$  size($\mathcal{M}$)
\State $\mathcal{L} \leftarrow \emptyset$
\State $PG \leftarrow \emptyset$
\For{$i\in 0$ \textbf{to} $N-1$}
  \State $L$ $\leftarrow$ transform(${\mathcal{M}}_{i}$)
  \Comment{Tree to list}
  \State $L^* \leftarrow \emptyset$
  \For{each ${\mathtt{C}}_{j}$ in $L$}
    \State ${\mathtt{C}}_{j}$ $\leftarrow$ clauseReduction(${\mathtt{C}}_{j}$) \Comment{Reduce clause}
    \State $L^*$ $\leftarrow$ append($L^*$, ${\mathtt{C}}_{j}$)
    \State $PG$ $\leftarrow$ append($PG$, literalGroups(${\mathtt{C}}_{j}$))
  \EndFor
  \State $\mathcal{L}$ $\leftarrow$ append($\mathcal{L}$, $L^*$)
\EndFor
\State \textbf{return} $\mathcal{L}$, $PG$
\EndProcedure
\end{algorithmic}
\end{algorithm}

\section{Experimental Evaluation}

We aim to answer the following questions explicitly: {\bf Q1:} Are our compression methods effective in improving explainability by reducing the number of clauses and the average clause length? {\bf Q2:} Are our compression methods faithful to the original model? {\bf Q3:} How do the methods compare against re-labeling the data and fitting a (logical) decision tree? To answer these questions, we consider $5$ standard relational data sets. We compare our {\sc SCoTE} and {\sc ECoTE} algorithms against a re-labeling method where the training data is re-labeled by the learned ensemble and a single tree is induced from this re-labeled data. We call this as {\sc soft labels}~\cite{craven96}. This is a strong baseline and is the one most often used for relational learning~\citep{natarajanMLJ12,KhotMLJ14}. 

{\bf Datasets:}  (1) The {\bf UW-CSE} data set \cite{mln06} is a standard benchmark where the task is to predict whether a professor and student share an advisor-advisee relationship based on other relationships involving professors, students, courses, publications, classes etc., in 5 areas of Computer Science.
(2) \textbf{Cora Entity Resolution} is a dataset describing the relationship between citations. The task is to identify which citations refer to the same publication.
(3) The {\bf IMDB} data set was first created by \cite{bottomupmln07} and contains relationships among movie elements like cast, genre etc. 
(4) The {\bf WebKB} data set consists of web pages and hyperlinks from 4 CS departments \citep{craven1998learning} and the task is to predict if someone is a faculty member. (5) The {\bf ICML} data set,  extracted from the Microsoft academic graph (MAG) \citep{sinha2015overview,Dhami21Gaifman}, consists of papers from ICML 2018 and the task is to predict coauthors.

\begin{sidewaystable*}
\sidewaystablefn%
\begin{minipage}[h]{\textheight}
    \centering
    \caption{Number of clauses/rules in relational domains and average rule length when the base model is {\bf Boosting} with 20 TILDE trees. Note that while subsumption compresses the model significantly, the gains are higher when using the example based reduction. ``Max clauses'' and ``avg'' length of original model are the results when the clauses are naively combined without any compression; ``max clauses'’ is the maximal possible number of clauses of combining 20 trees which is also the worst case when no compression can be applied; ``avg'’ is the average clause length in this case; ``\#paths'’ is the total number of paths of the original 20 trees; ``depth'’ is the maximal depth over the 20 trees. AUC scores close or higher to original model are bolded.} \label{tab:relational-domains-boosting}
    \scalebox{0.62}{
    \begin{tabular}{@{\extracolsep{\fill}}cM{0.15\textwidth}ccM{0.065\textwidth}M{0.05\textwidth}M{0.05\textwidth}m{0.05\textwidth}M{0.08\textwidth}M{0.04\textwidth}M{0.05\textwidth}m{0.05\textwidth}M{0.08\textwidth}M{0.04\textwidth}M{0.05\textwidth}m{0.05\textwidth}M{0.08\textwidth}M{0.04\textwidth}M{0.05\textwidth}M{0.05\textwidth}@{\extracolsep{\fill}}}
        \toprule
         \multirow{2}{*}[-1.8ex]{Domain} & \multirow{2}{*}[-1.8ex]{Relation} & \multicolumn{6}{@{}c@{}}{Original model} & \multicolumn{4}{@{}c@{}}{SCoTE} & \multicolumn{4}{@{}c@{}}{ECoTE} & \multicolumn{4}{@{}c@{}}{Soft labels}\\
        \cmidrule(lr){3-8}\cmidrule(lr){9-12}\cmidrule(lr){13-16}\cmidrule(lr){17-20}
         & & \#paths & depth & max clauses & avg & ROC & PR & \#clause & avg & ROC & PR & \#clause & avg & ROC & PR & \#clause & avg & ROC & PR \\
        \midrule
        \multirow{4}{*}[-7.3ex]{Cora} & Same-Author(A,B) & 99 & 5 & 7.6 $\times {10}^{13}$ & 58.10 & 0.726 & 0.941 & 18 & 6.83 & \textbf{0.726} & \textbf{0.941} & 9 & 4.33 & \textbf{0.726} & \textbf{0.941} & 5 & 3.00 & 0.673 & 0.938 \\
        \cmidrule{2-20}
        & Same-Bib(A,B) & 80 & 4 & 1.1 $\times {10}^{12}$ & 36.50 & 0.921 & 0.949 & 7 & 2.86 & \textbf{0.921} & \textbf{0.949} & 7 & 2.86 & \textbf{0.921} & \textbf{0.949} & 4 & 2.00 & 0.918 & \textbf{0.948} \\
        \cmidrule{2-20}
        & Same-Title(A,B) & 80 & 6 & 1.1 $\times {10}^{12}$ & 38.75 & 0.693 & 0.672 & 30 & 6.33 & \textbf{0.693} & \textbf{0.672} & 14 & 3.86 & \textbf{0.693} & \textbf{0.672} & 4 & 1.75 & 0.684 & 0.656 \\
        \cmidrule{2-20}
        & Same-Venue(A,B) & 100 & 7 & 9.5 $\times {10}^{13}$ & 56.40 & 0.806 & 0.694 & 238 & 10.70 & \textbf{0.806} & \textbf{0.694} & 52 & 6.23 & \textbf{0.805} & 0.690 & 5 & 3.20 & 0.553 & 0.416 \\
        \midrule
        \multirow{3}{*}[-5ex]{WebKB} &  Faculty(A) & 129 & 7 & 1.2 $\times {10}^{16}$ & 42.58 & 0.678 & 0.474 & 96 & 10.54 & \textbf{0.678} & \textbf{0.474} & 29 & 4.93 & \textbf{0.678} & \textbf{0.474} & 8 & 2.38 & 0.592 & 0.231 \\
        \cmidrule{2-20}
        & Course-Prof(A,B) & 175 & 6 & 6.3 $\times {10}^{18}$ & 49.60 & 0.521 & 0.365 & 36 & 4.19 & \textbf{0.521} & \textbf{0.365} & 18 & 2.78 & \textbf{0.521} & \textbf{0.365} & 9 & 2.22 & 0.491 & 0.333 \\
        \cmidrule{2-20}
        & Course-TA(A,B) & 164 & 7 & 1.5 $\times {10}^{18}$ & 56.60 & 0.575 & 0.439 & 31 & 5.35 & \textbf{0.575} & \textbf{0.439} & 12 & 2.83 & 0.548 & 0.374 & 7 & 3.71 & 0.540 & 0.384 \\
        \midrule
        UW-CSE & Advised-By(A,B) & 157 & 4 & 5.5 $\times {10}^{17}$ & 35.52 & 0.997 & 0.994 & n/a & n/a & n/a & n/a & 103 & 2.95 & \textbf{0.996} & \textbf{0.993} & 9 & 2.22 & 0.943 & 0.927 \\
        \midrule
        \multirow{3}{*}[-5ex]{IMDB} & Worked-Under(A,B) & 81 & 3 & 1.4 $\times {10}^{12}$ & 30.10 & 1.000 & 1.000 & 6 & 2.00 & \textbf{1.000} & \textbf{1.000} & 5 & 1.60 & \textbf{1.000} & \textbf{1.000} & 4 & 1.50 & \textbf{1.000} & \textbf{1.000} \\
        \cmidrule{2-20}
        & Genre(A,B) & 100 & 1 & 9.5 $\times {10}^{13}$ & 16.00 & 0.620 & 0.452 & 512 & 4.50 & \textbf{0.620} & \textbf{0.452} & 9 & 0.89 & \textbf{0.620} & \textbf{0.452} & 5 & 0.80 & 0.419 & 0.178 \\
        \cmidrule{2-20}
        & Female-Gender(A) & 133 & 4 & 2.5 $\times {10}^{16}$ & 26.70 & 0.655 & 0.633 & 1591 & 5.32 & 0.655 & \textbf{0.633} & 13 & 1.00 & \textbf{0.657} & \textbf{0.634} & 6 & 1.67 & 0.626 & 0.542 \\
        \midrule
        ICML & Co-Author(A,B) & 259 & 4 & 1.8 $\times {10}^{22}$ & 31.03 & 0.844 & 0.275 & n/a & n/a & n/a & n/a & 3972 & 3.18 & \textbf{0.801} & \textbf{0.269} & 13 & 1.46 & 0.501 & 0.022 \\
        \botrule
        \end{tabular}
        }
    \end{minipage}
\end{sidewaystable*}

{\bf Experiment setup:} We ran Boosting and Bagging algorithms with the number of trees set as 20 to yield two sets of decision trees. We then ran compression methods SCoTE and ECoTE on both Boosting and Bagging trees. We also ran soft labels method on Boosting trees. For Boosting, the size of each tree is small as weak learner. For Bagging, large TILDE trees are needed for better performance.

{\bf Results:} Table~\ref{tab:relational-domains-boosting} presents the results of compressing the boosted model with 20 TILDE trees in all the domains and compares {\sc SCoTE} and {\sc ECoTE} with soft labels. As can be noted, a simple combination can yield an exponential number of rules with large rule lengths. In all but two of the domains, {\sc SCoTE} was able to achieve significant compression. In two domains, the runtime for {\sc SCoTE} is excessive and it did not complete the compression in 10 hours. 
We also note from the results that {\sc ECoTE} is significantly better than {\sc SCoTE} in a majority of the domains.
\begin{sidewaystable*}
\sidewaystablefn%
\begin{minipage}[h]{\textheight}
    \centering
    \caption{Number of clauses/rules in relational domains and average rule length when the base model is {\bf Bagging} with 20 large TILDE trees. The results are similar to the boosted combination where {\sc ECoTE} is more efficient than {\sc SCoTE} without significance loss in performance. Column names refer to Table~\ref{tab:relational-domains-boosting}. AUC scores close or higher to original model are bolded.} \label{tab:relational-domains-bagging}
    \scalebox{0.65}{
    \begin{tabular}{@{\extracolsep{\fill}}cM{0.15\textwidth}M{0.065\textwidth}cM{0.08\textwidth}M{0.04\textwidth}M{0.05\textwidth}M{0.05\textwidth}M{0.08\textwidth}M{0.04\textwidth}M{0.05\textwidth}M{0.05\textwidth}M{0.08\textwidth}M{0.04\textwidth}M{0.05\textwidth}M{0.05\textwidth}@{\extracolsep{\fill}}}
        \toprule
        \multirow{2}{*}[-1.8ex]{Domain} & \multirow{2}{*}[-1.8ex]{Relation} & \multicolumn{6}{@{}c@{}}{Original model} & \multicolumn{4}{@{}c@{}}{SCoTE} & \multicolumn{4}{@{}c@{}}{ECoTE}\\
        \cmidrule(lr){3-8}\cmidrule(lr){9-12}\cmidrule(lr){13-16}
         & & \#paths & depth & Max clauses & avg & ROC & PR & \#clause & avg & ROC & PR & \#clause & avg & ROC & PR \\
        \midrule
        \multirow{4}{*}[-2.3ex]{Cora} & Same-Author(A,B) & 66 & 7 & 2.3 $\times {10}^{7}$ & 29.14 & 0.676 & 0.944 & 51 & 7.24 & \textbf{0.676} & \textbf{0.944} & 23 & 4.96 & \textbf{0.676} & \textbf{0.944} \\
        \cmidrule{2-16}
        & Same-Bib(A,B) & 164 & 11 & 1.1 $\times {10}^{16}$ & 60.25 & 0.927 & 0.960 & 141 & 12.86 & \textbf{0.927} & \textbf{0.960} & 40 & 6.70 & \textbf{0.926} & \textbf{0.959} \\
        \cmidrule{2-16}
        & Same-Title(A,B) & 47 & 5 & 9.7 $\times {10}^{4}$ & 17.41 & 0.656 & 0.640 & 48 & 7.21 & \textbf{0.656} & \textbf{0.640} & 15 & 3.93 & \textbf{0.656} & \textbf{0.640} \\
        \cmidrule{2-16}
        & Same-Venue(A,B) & 95 & 9 & 1.2 $\times {10}^{9}$ & 35.29 & 0.573 & 0.433 & 492 & 13.66 & \textbf{0.573} & \textbf{0.433} & 65 & 6.22 & \textbf{0.572} & 0.431 \\
        \midrule
        \multirow{3}{*}[-2.6ex]{WebKB} & Faculty(A) & 81 & 7 & 2.8 $\times {10}^{10}$ & 23.01 & 0.696 & 0.514 & 52 & 6.52 & \textbf{0.696} & \textbf{0.514} & 29 & 5.00 & \textbf{0.696} & \textbf{0.514} \\
        \cmidrule{2-16}
        & Course-Prof(A,B) & 100 & 7 & 7.5 $\times {10}^{11}$ & 31.68 & 0.506 & 0.361 & 45 & 5.11 & \textbf{0.506} & \textbf{0.361} & 19 & 2.89 & \textbf{0.507} & \textbf{0.361} \\
        \cmidrule{2-16}
        & Course-TA(A,B) & 50 & 7 & 1.2 $\times {10}^{6}$ & 17.29 & 0.490 & 0.338 & 16 & 3.44 & \textbf{0.490} & \textbf{0.338} & 9 & 2.56 & \textbf{0.490} & \textbf{0.338} \\
        \midrule
        UW-CSE & Advised-By(A,B) & 325 & 7 & 1.5 $\times {10}^{24}$ & 48.76 & 0.987 & 0.975 & n/a & n/a & n/a & n/a & 126 & 2.42 & \textbf{0.986} & 0.956 \\
        \midrule
        \multirow{3}{*}[-3.5ex]{IMDB} & Worked-Under(A,B) & 85 & 8 & 2.4 $\times {10}^{10}$ & 31.42 & 1.000 & 1.000 & 540 & 8.79 & \textbf{1.000} & \textbf{1.000} & 42 & 3.24 & \textbf{1.000} & \textbf{1.000} \\
        \cmidrule{2-16}
        & Genre(A,B) & 65 & 5 & 1.2 $\times {10}^{8}$ & 14.68 & 0.709 & 0.483 & 6444 & 8.73 & \textbf{0.709} & \textbf{0.483} & 11 & 1.36 & \textbf{0.709} & \textbf{0.483} \\
        \cmidrule{2-16}
        & Female-Gender(A,B) & 132 & 6 & 2.9 $\times {10}^{13}$ & 35.88 & 0.701 & 0.646 & n/a & n/a & n/a & n/a & 18 & 1.72 & 0.680 & 0.635 \\
        \midrule
        ICML & Co-Author(A,B) & 196 & 4 & 3.4 $\times {10}^{16}$ & 25.71 & 0.705 & 0.155 & n/a & n/a & n/a & n/a & 1738 & 2.99 & 0.684 & 0.110 \\
        \botrule
        \end{tabular}}
    \end{minipage}
\end{sidewaystable*}
Similar results can be observed when compressing bagged models with 20 large TILDE trees in Table~\ref{tab:relational-domains-bagging} where {\sc SCoTE} achieves significant compression in a majority of domains while {\sc ECoTE} is even better. 
\begin{figure*}[h]
  \centering
  \begin{subfigure}[t]{0.49\columnwidth}
    \centering
    \includegraphics[width=1.\columnwidth]{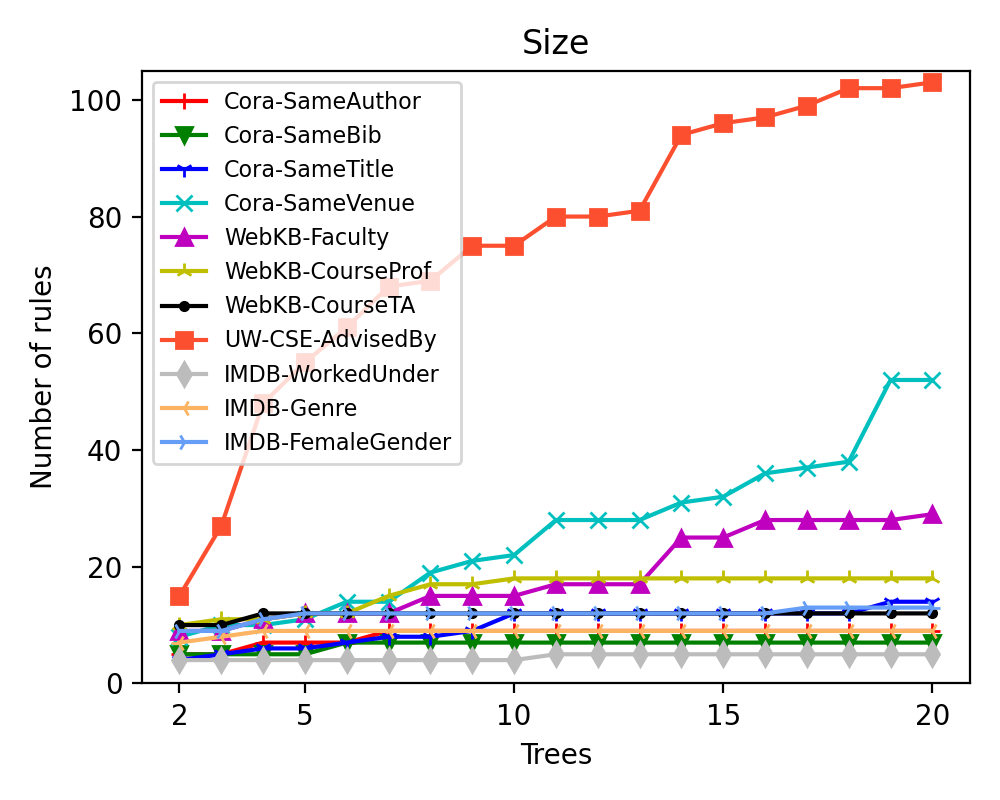}
    \label{fig:relational-domains}
  \end{subfigure}
  \hfill
  \begin{subfigure}[t]{0.49\columnwidth}
    \centering
    \includegraphics[width=1.\columnwidth]{images/cora-samevenue-aucpr.png}
    \label{fig:cora-samevenue-perform-mp}
  \end{subfigure}
  \caption{Sample results for example based compression on gradient boosted trees. (a) Number of rules vs. number of trees for 11 relational domains. (b) Cora SameVenue task. Note that the list's train and test set performance are not significantly worse than the tree performance } \label{fig:relational-boosting-results}
\end{figure*}

A sample of these results can be seen in Figure~\ref{fig:relational-boosting-results}. Figure~\ref{fig:relational-boosting-results}.$a$ presents the results of plotting the number of rules as a function of the number of trees for {\sc ECoTE}. Note that we do not plot ICML because the number of rules are quite high and it will render all the other curves flat. 
This shows that in most problem we are able to compress into a small number of rules regardless of the number of trees combined. 
The performance in a single relation of CORA is presented in Figure~\ref{fig:relational-boosting-results}.$b$. 
This illustrates that Boosting is needed
(because at least 8 trees are needed to obtain good performance), that there is no overfitting because train and test performance are close to each other, and that the decision list matches the performance of the ensemble of trees. 
More performance curves are presented in the appendix. 
In summary, {\bf Q1} can be answered affirmatively in that the algorithms achieve significant compression over simple combinations.

To understand the effectiveness, we present the test set performance of the compression algorithms in Table~\ref{tab:relational-domains-boosting} for boosted models and in Table~\ref{tab:relational-domains-bagging}. Specifically, we present the AUCPR and AUCROC obtained by the original models (boosting with 20 trees and bagging with 20 trees respectively) and compare their performance with {\sc SCoTE} and {\sc ECoTE}. It can be observed that 
as expected {\sc SCoTE} preserved the predictions of the ensemble exactly, and the score for {\sc ECoTE} are not significantly different, thus answering {\bf Q2} affirmatively.

While it appears that the soft label generation method has significantly fewer clauses with smaller lengths, it suffers from two issues -- being approximate, its performance is sometimes significantly worse than {\sc SCoTE} and {\sc ECoTE}.
In nearly all the domains, its AUCPR and AUCROC is worse than the compression methods. The second issue is that the final learned model is not representative (neither logically equivalent nor coverage equivalent) of the original model. Hence, it cannot be directly used for interpretation of the original model. Thus {\bf Q3} can be strongly answered in that the compression techniques significantly outperform the soft label strategy.

The discussion so far provided a quantitative evaluating of the compression algorithms. We next illustrate the compression qualitatively by showing a concrete outcome. 
The final combined decision list of $SameBib(a,b)$ (denoted as {\em SB}) using {\sc ECoTE} to compress $20$ boosting trees is presented below ({\em T - Title, V - Venue, A - Author, HV - HasWordVenue, HA - HasWordAuthor}). As can be seen the rules are compact and interpretable. This clearly demonstrates that it is possible to convert a potential blackbox model such as relational boosted trees and generate an interpretable and explainable model. 
\resizebox{1.\columnwidth}{!}{
\begin{minipage}{\columnwidth}
\begin{align*}
4.462: SB(a,b) & :- \, T(a,c) \land T(b,c). \\
2.684: SB(a,b) & :- \, V(b,c) \land V(a,c) \land A(b,d) \land HV(e,f) \land \\
               & \hspace{2em} A(a,d) \land HA(d,f). \\
2.667: SB(a,b) & :- \, V(b,c) \land V(a,c) \land A(b,d) \land A(a,d). \\
2.489: SB(a,b) & :- \, V(b,c) \land V(a,c). \\
0.158: SB(a,b) & :- \, A(b,c) \land HV(d,e) \land A(a,c) \land HA(c,e). \\
0.141: SB(a,b) & :- \, A(a,c) \land A(b,c). \\
0.198: SB(a,b) & :- \, . \\
\end{align*} 
\end{minipage}
}
While successful, both {\sc SCoTE} and {\sc ECoTE} can be less efficient in certain conditions. For instance, in {\em ICML}, the learned model mainly had constants in the tree nodes (as opposed to variables in other data sets). This renders the subsumption powerless as the constants cannot subsume other constants. Consequently, {\sc SCoTE} did not converge/finish even after 10 hours. While {\sc ECoTE} does not rely on subsumption and was able to converge/finish, it still generated large number of rules ($\sim 4000$). The investigation of the efficiency of these methods on purely propositional data sets (where the models employ only constants) remains an interesting direction for future research. It is clear that from our model construction and our experiments that 
\emph{relational problems are more amenable to compression} due to the inherent parameterization.

\section{Conclusion}

We have presented {\sc SCoTE} and {\sc ECoTE}, two compression algorithms based on subsumption and example based reduction to compress relational ensemble models. While {\sc SCoTE} preserves logical equivalence, {\sc ECoTE} preserves the example coverage and thus both methods do not suffer a loss in performance due to compression. We presented the algorithms from a classification perspective but it should be noted that this can be extended to perform regression or even algebraic operations of trees such as product, addition or max operations. This will allow the learnable boosted/bagged models to be used for relational reinforcement learning. Solving relational Bellman operator (REBEL)~\cite{kersting04,joshi09,wang08,sanner09} can allow for building relational RL algorithms efficiently. Understanding the relationships of the relational compression algorithms to classical propositional compression ones is an interesting future research direction.

\backmatter

\bmhead{Supplementary information}

Please see Appendix~\ref{app:perform}.

\bmhead{Acknowledgments}
Siwen Yan and Sriraam Natarajan acknowledge DARPA Minerva award FA9550-19-1-0391. Sriraam Natarajan also acknowledges the support of ARO award W911NF2010224. Prasad Tadepalli acknowledges support of DARPA contract N66001-17-2-4030 and NSF \& USDA-NIFA under grant 2021-67021-35344. Roni Khardon acknowledges support of NSF under grant IIS-2002393.

\noindent

\bibliography{sn-bibliography}

\begin{appendices}

\section{Performance} \label{app:perform}

Figures \ref{fig:Cora-sameauthor-boost} through \ref{fig:ICML-coauthor-boost} of the appendix present the train and test set performances of the original boosted trees and the learned decision lists. It can be observed that in all the domains the performance between the original trees and the compressed decision lists are not significantly different. In addition, a majority of domains clearly show the advantage of boosting where the performance increases as the number of trees increase, thus justifying the necessity of learning a powerful model and then deriving a compact representation of these models. 
\begin{figure*}[h]
  \centering
  \begin{subfigure}[t]{0.49\columnwidth}
    \centering
    \includegraphics[width=1.\linewidth]{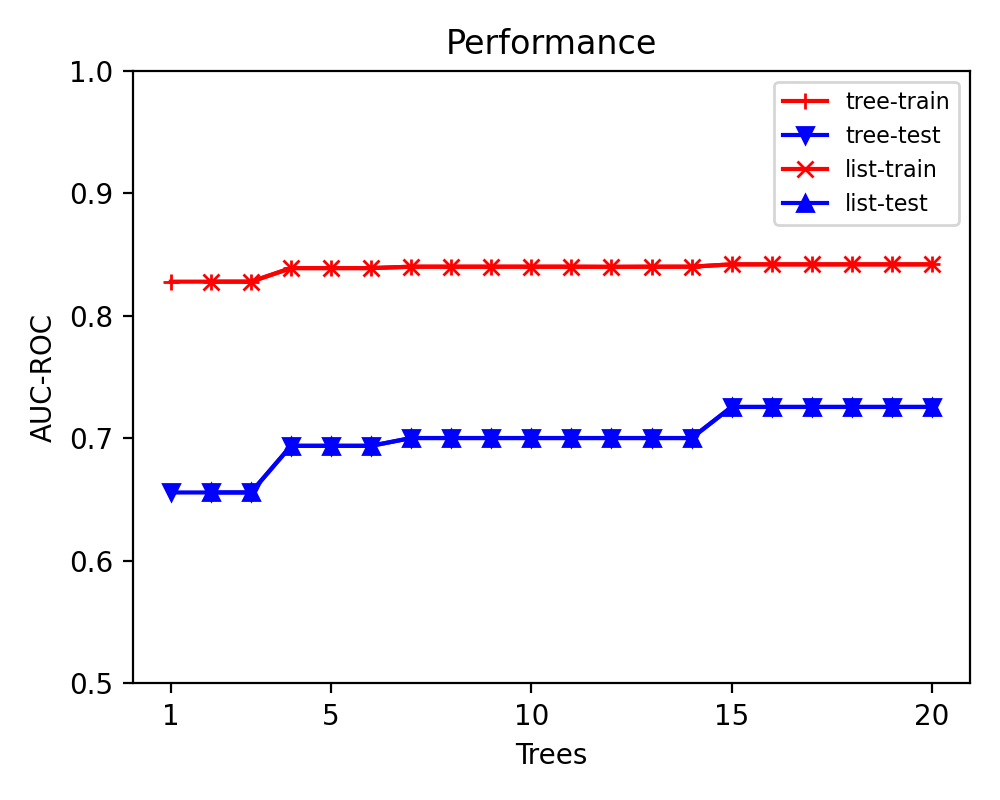}
    \label{fig:Cora-sameauthor-boost-roc}
  \end{subfigure}
  \hfill
  \begin{subfigure}[t]{0.49\columnwidth}
    \centering
    \includegraphics[width=1.\linewidth]{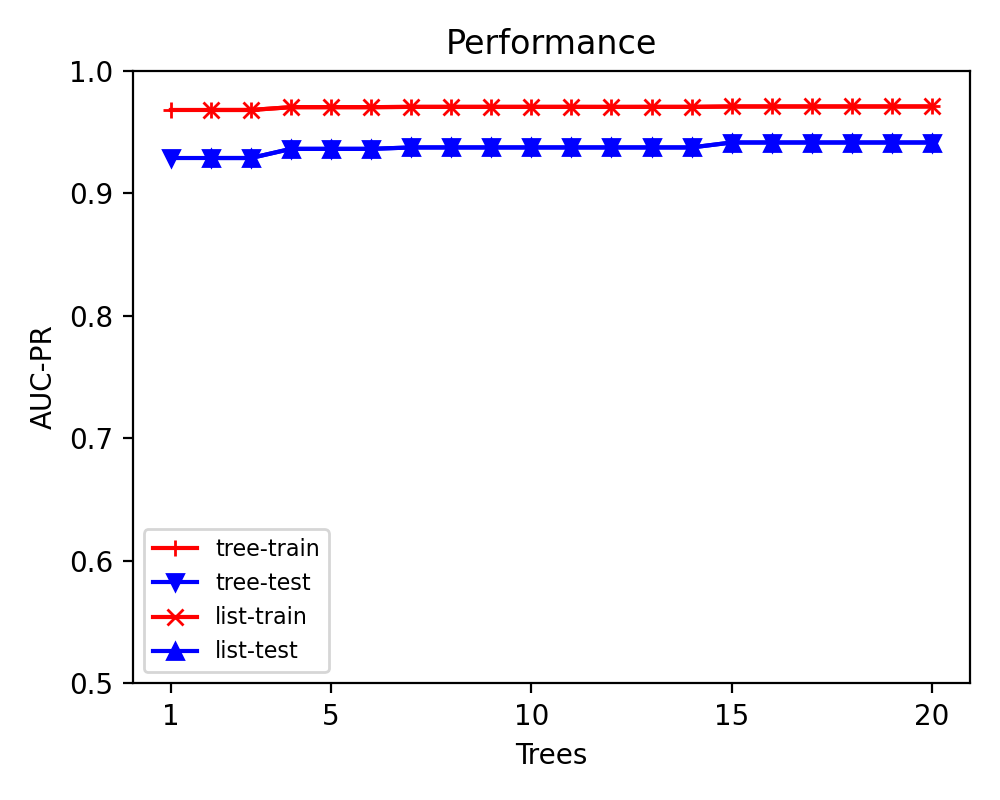}
    \label{fig:Cora-sameauthor-boost-pr}
  \end{subfigure}
  \caption{Cora SameAuthor with boosting}
 \label{fig:Cora-sameauthor-boost}
\end{figure*}

\begin{figure*}[h]
  \centering
  \begin{subfigure}[t]{0.49\columnwidth}
    \centering
    \includegraphics[width=1.\columnwidth]{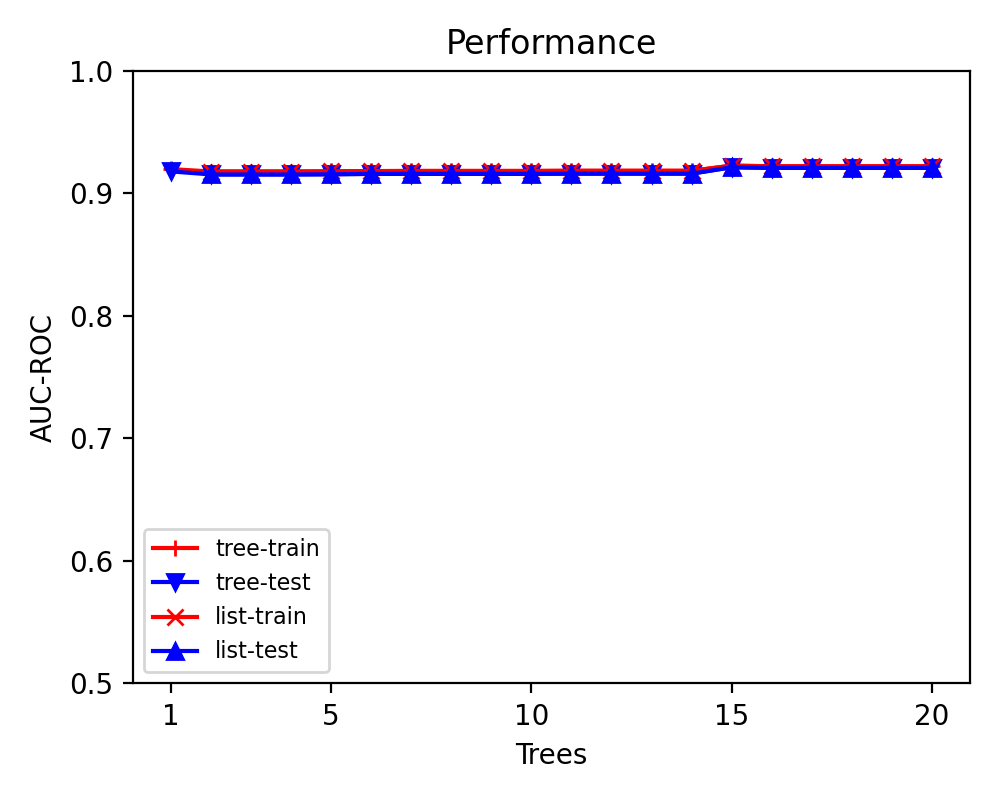}
    \label{fig:Cora-samebib-boost-roc}
  \end{subfigure}
  \hfill
  \begin{subfigure}[t]{0.49\columnwidth}
    \centering
    \includegraphics[width=1.\columnwidth]{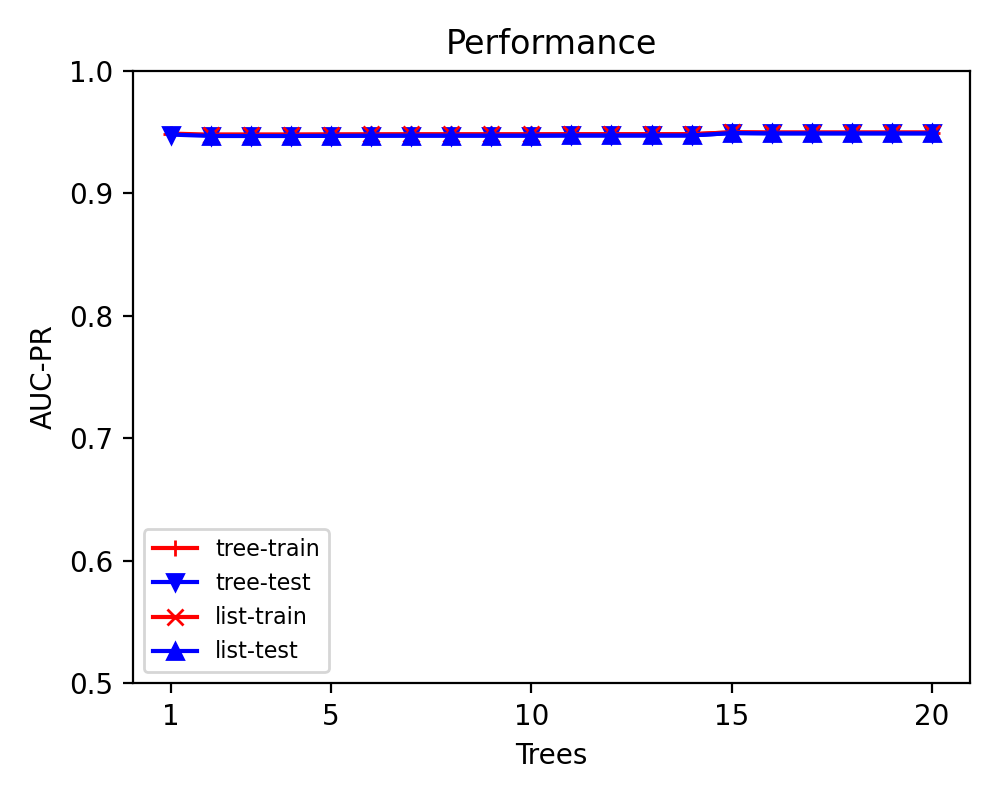}
    \label{fig:Cora-samebib-boost-pr}
  \end{subfigure}
  \caption{Cora SameBib with boosting}
 \label{fig:Cora-samebib-boost}
\end{figure*}

\begin{figure*}[h]
  \centering
  \begin{subfigure}[t]{0.49\columnwidth}
    \centering
    \includegraphics[width=1.\columnwidth]{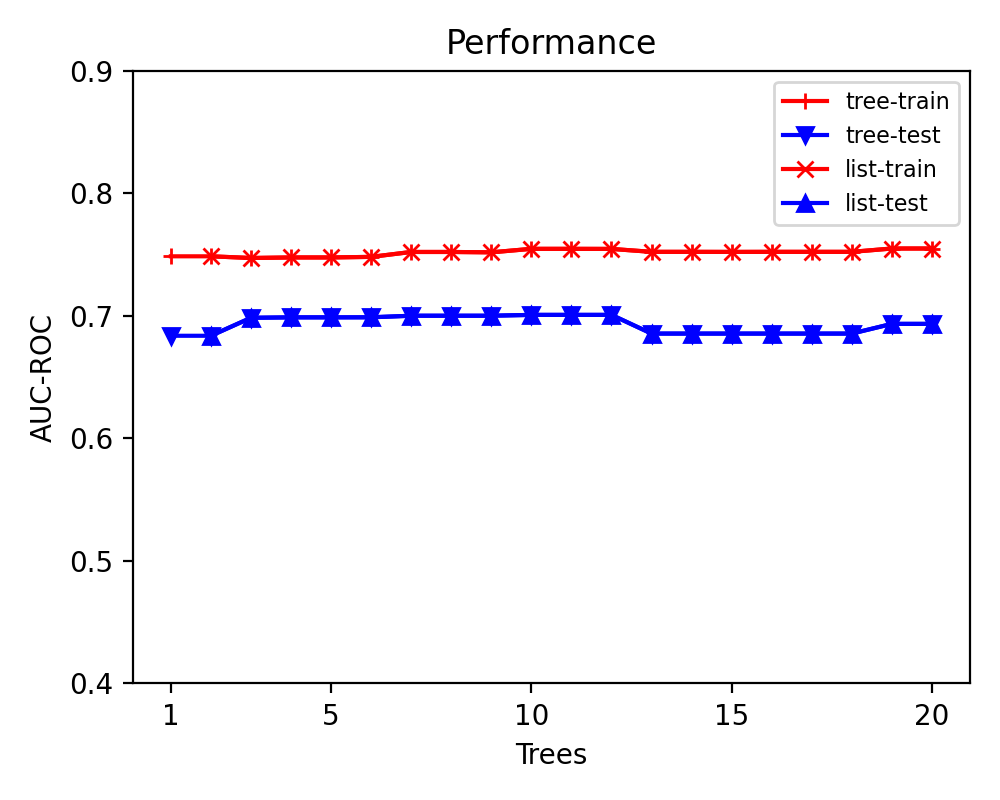}
    \label{fig:Cora-sametitle-boost-roc}
  \end{subfigure}
  \hfill
  \begin{subfigure}[t]{0.49\columnwidth}
    \centering
    \includegraphics[width=1.\columnwidth]{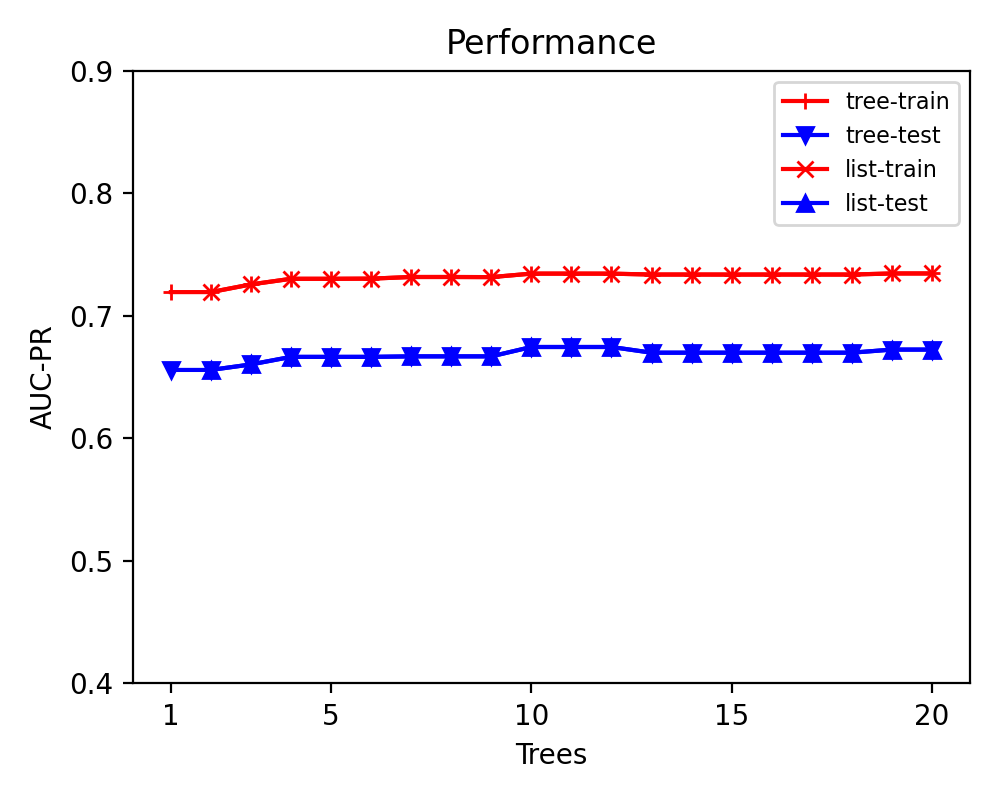}
    \label{fig:Cora-sametitle-boost-pr}
  \end{subfigure}
  \caption{Cora SameTitle with boosting}
 \label{fig:Cora-sametitle-boost}
\end{figure*}

\begin{figure*}[h]
  \centering
  \begin{subfigure}[t]{0.49\columnwidth}
    \centering
    \includegraphics[width=1.\columnwidth]{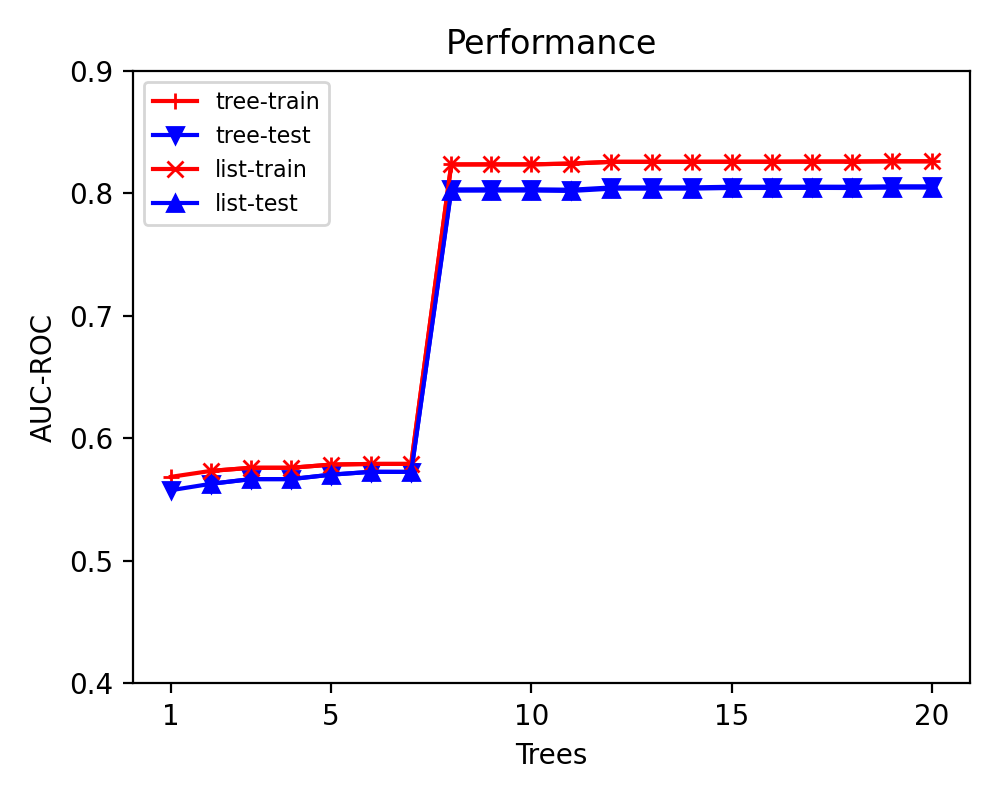}
    \label{fig:Cora-samevenue-boost-roc}
  \end{subfigure}
  \hfill
  \begin{subfigure}[t]{0.49\columnwidth}
    \centering
    \includegraphics[width=1.\columnwidth]{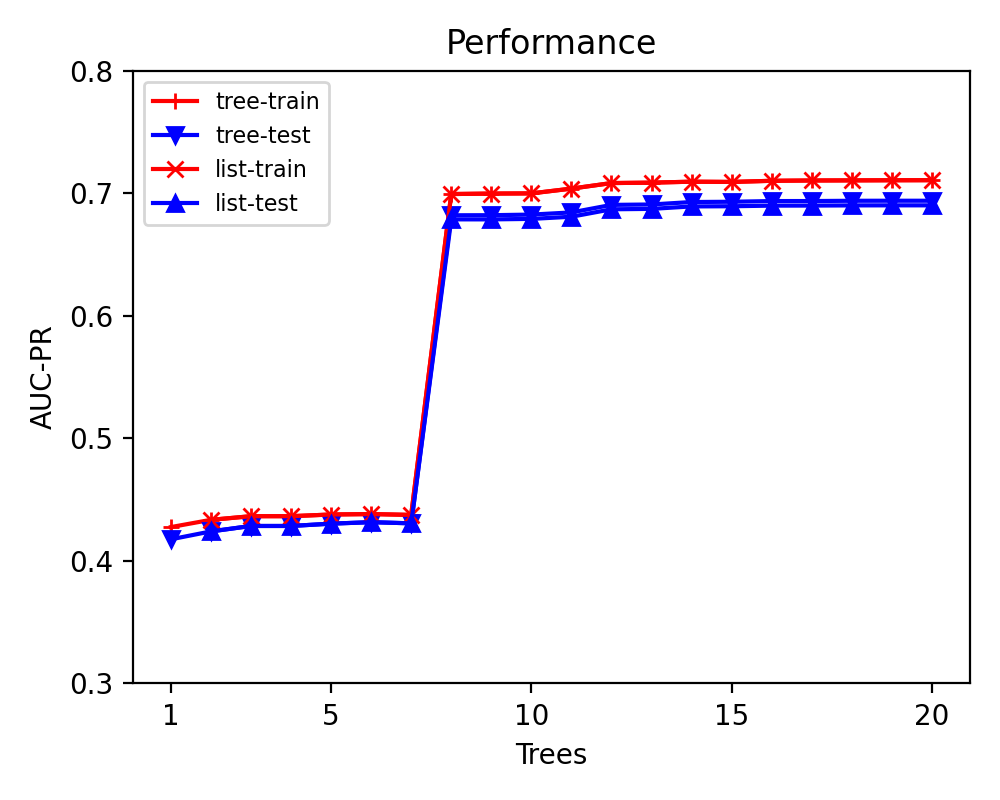}
    \label{fig:Cora-samevenue-boost-pr}
  \end{subfigure}
  \caption{Cora SameVenue with boosting}
 \label{fig:Cora-samevenue-boost}
\end{figure*}

\begin{figure*}[h]
  \centering
  \begin{subfigure}[t]{0.49\columnwidth}
    \centering
    \includegraphics[width=1.\columnwidth]{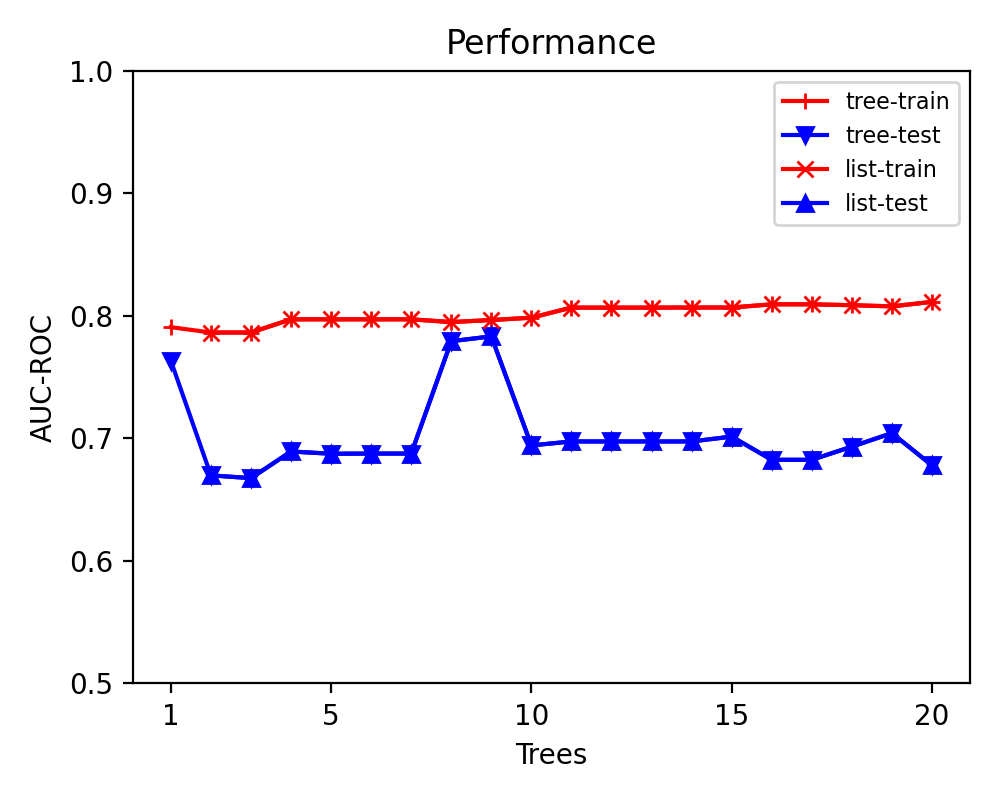}
    \label{fig:WebKB-faculty-boost-roc}
  \end{subfigure}
  \hfill
  \begin{subfigure}[t]{0.49\columnwidth}
    \centering
    \includegraphics[width=1.\columnwidth]{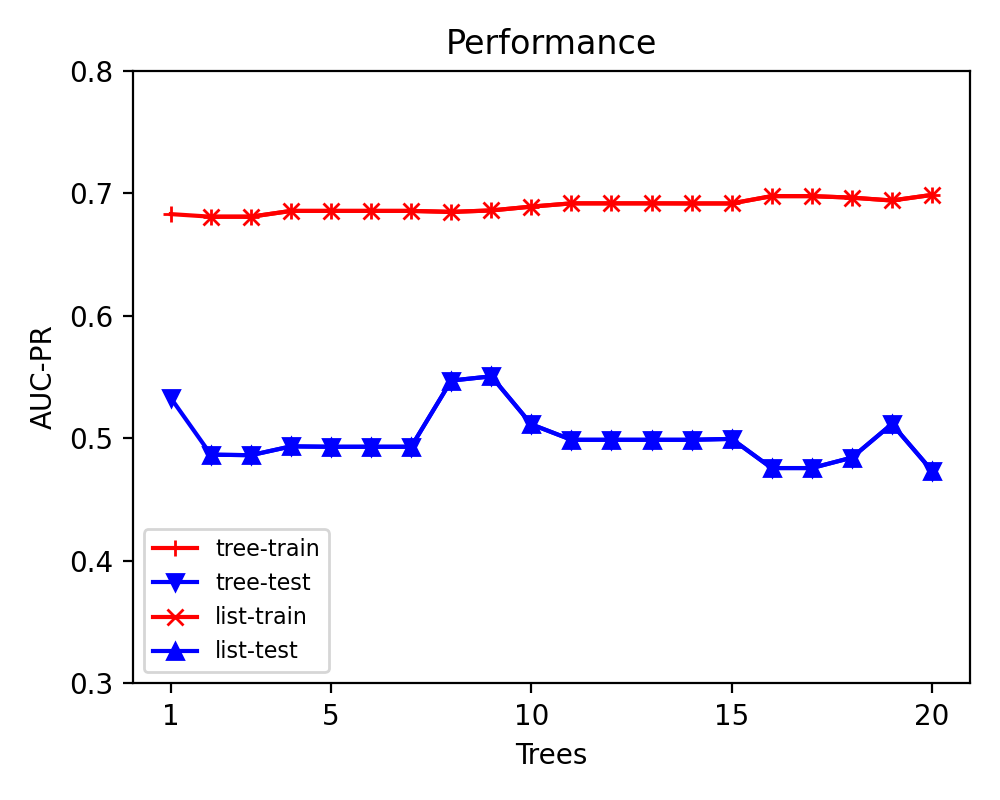}
    \label{fig:WebKB-faculty-boost-pr}
  \end{subfigure}
  \caption{WebKB Faculty with boosting}
 \label{fig:WebKB-faculty-boost}
\end{figure*}

\begin{figure*}[h]
  \centering
  \begin{subfigure}[t]{0.49\columnwidth}
    \centering
    \includegraphics[width=1.\columnwidth]{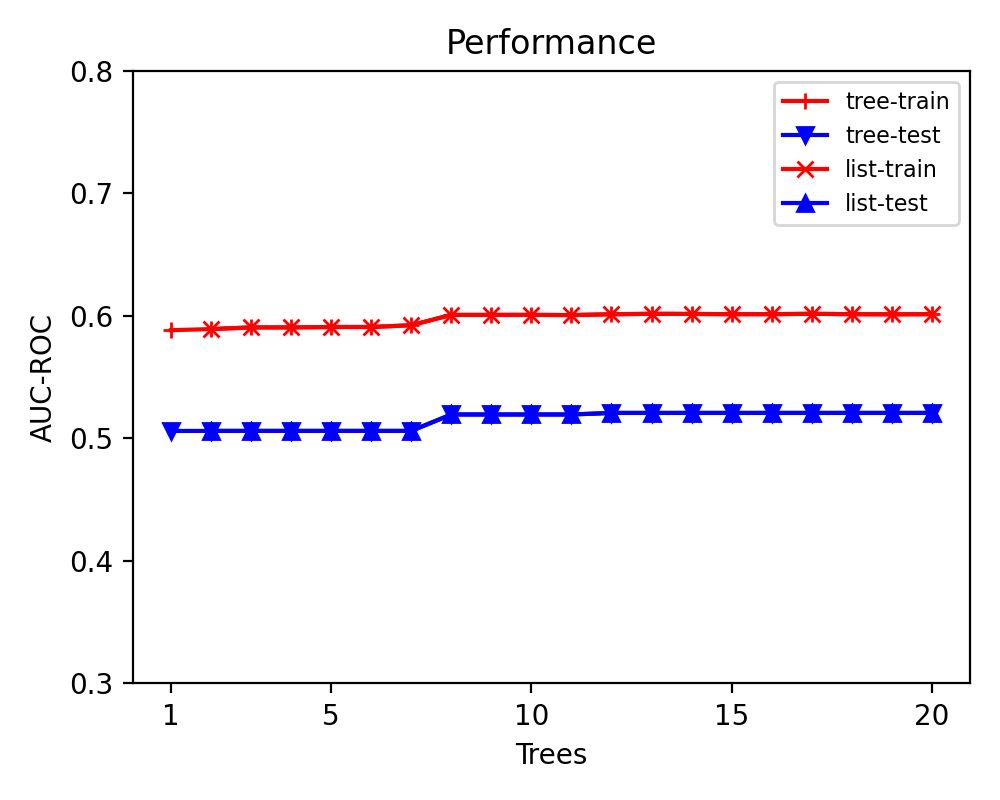}
    \label{fig:WebKB-courseprof-boost-roc}
  \end{subfigure}
  \hfill
  \begin{subfigure}[t]{0.49\columnwidth}
    \centering
    \includegraphics[width=1.\columnwidth]{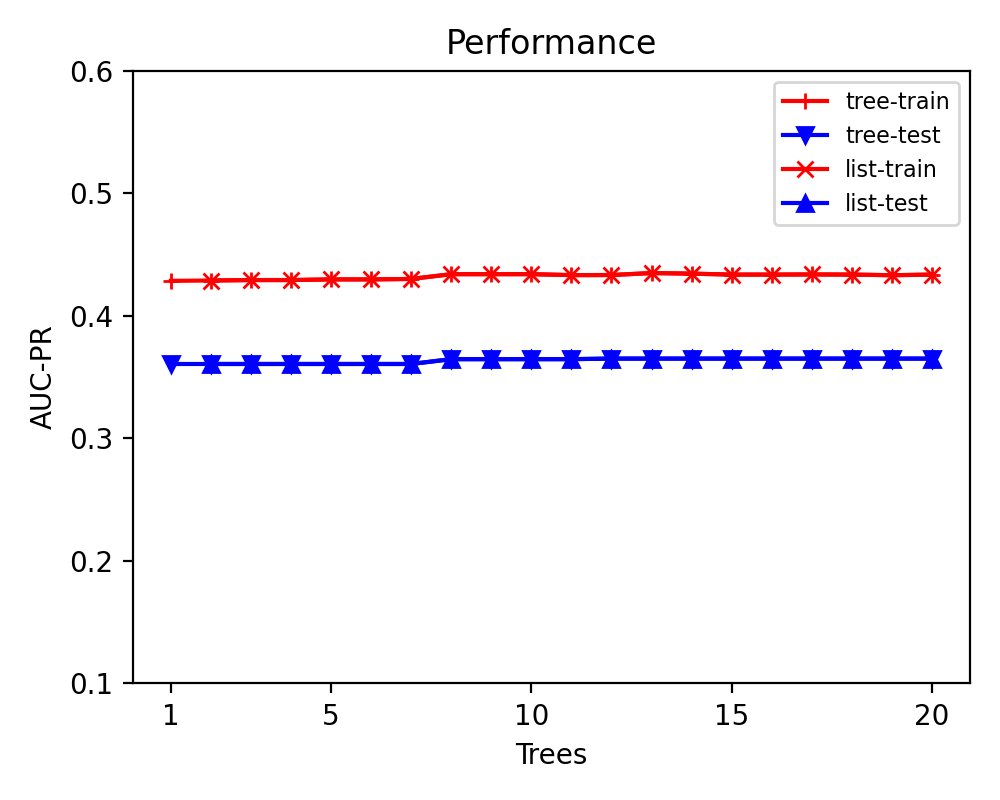}
    \label{fig:WebKB-courseprof-boost-pr}
  \end{subfigure}
  \caption{WebKB CourseProf with boosting}
 \label{fig:WebKB-courseprof-boost}
\end{figure*}

\begin{figure*}[h]
  \centering
  \begin{subfigure}[t]{0.49\columnwidth}
    \centering
    \includegraphics[width=1.\columnwidth]{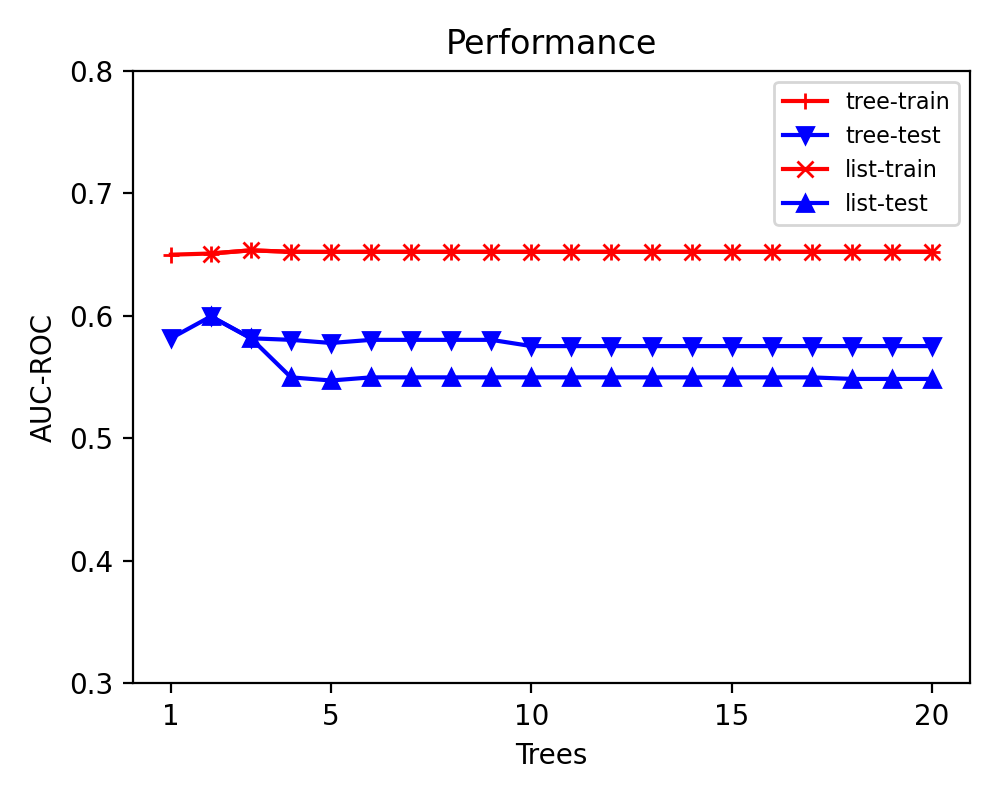}
    \label{fig:WebKB-courseta-boost-roc}
  \end{subfigure}
  \hfill
  \begin{subfigure}[t]{0.49\columnwidth}
    \centering
    \includegraphics[width=1.\columnwidth]{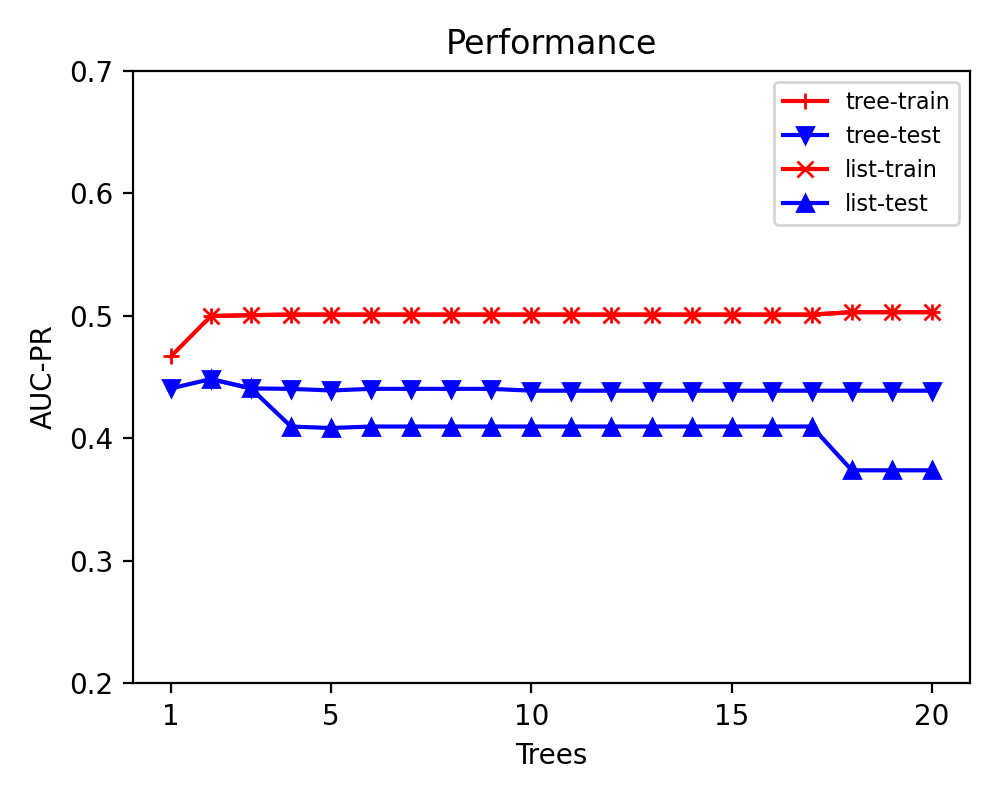}
    \label{fig:WebKB-courseta-boost-pr}
  \end{subfigure}
  \caption{WebKB CourseTA with boosting}
 \label{fig:WebKB-courseta-boost}
\end{figure*}

\begin{figure*}[h]
  \centering
  \begin{subfigure}[t]{0.49\columnwidth}
    \centering
    \includegraphics[width=1.\columnwidth]{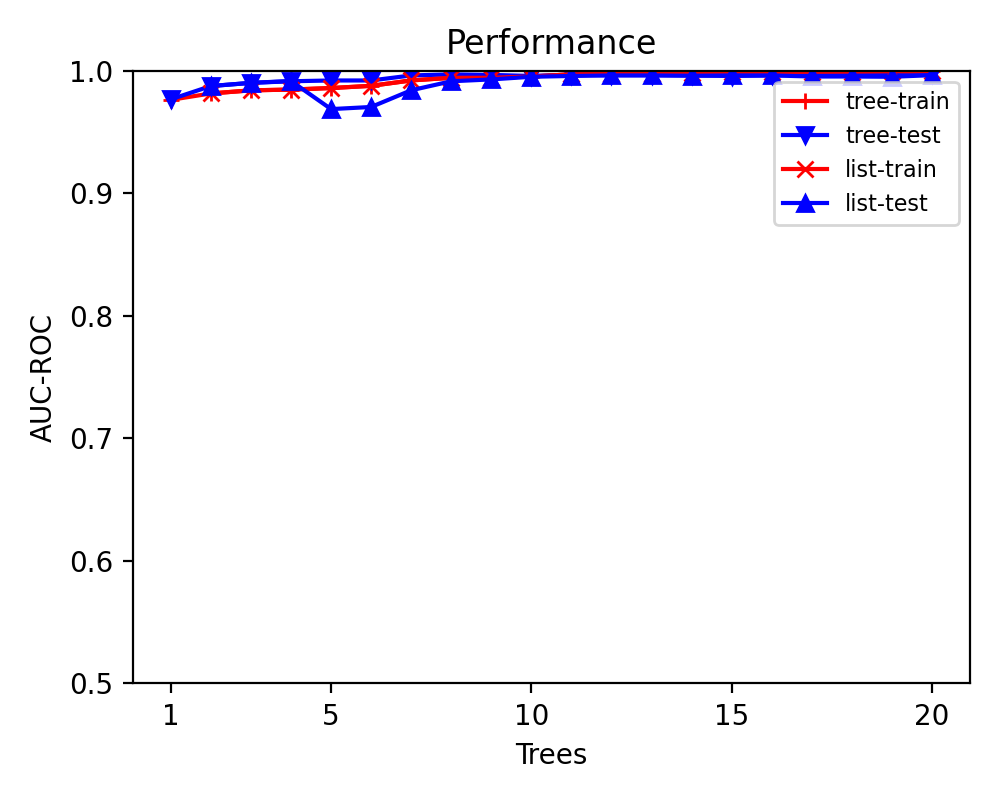}
    \label{fig:UW-CSE-advisedby-boost-roc}
  \end{subfigure}
  \hfill
  \begin{subfigure}[t]{0.49\columnwidth}
    \centering
    \includegraphics[width=1.\columnwidth]{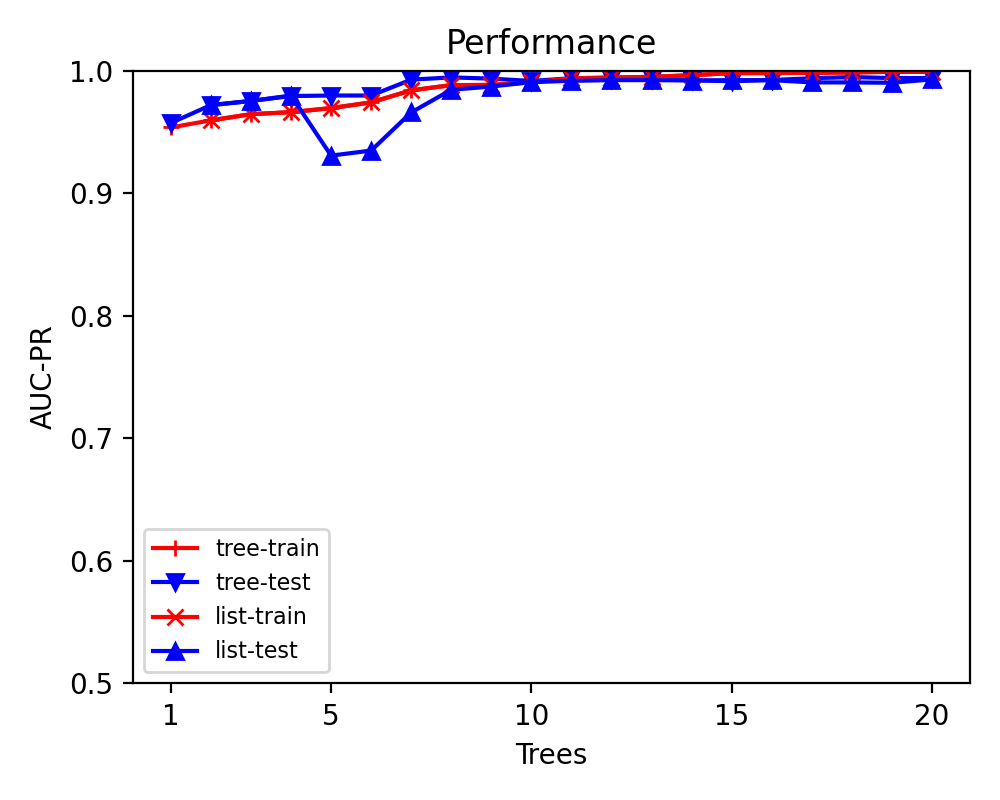}
    \label{fig:UW-CSE-advisedby-boost-pr}
  \end{subfigure}
  \caption{UW-CSE AdvisedBy with boosting}
 \label{fig:UW-CSE-advisedby-boost}
\end{figure*}

\begin{figure*}[h]
  \centering
  \begin{subfigure}[t]{0.49\columnwidth}
    \centering
    \includegraphics[width=1.\columnwidth]{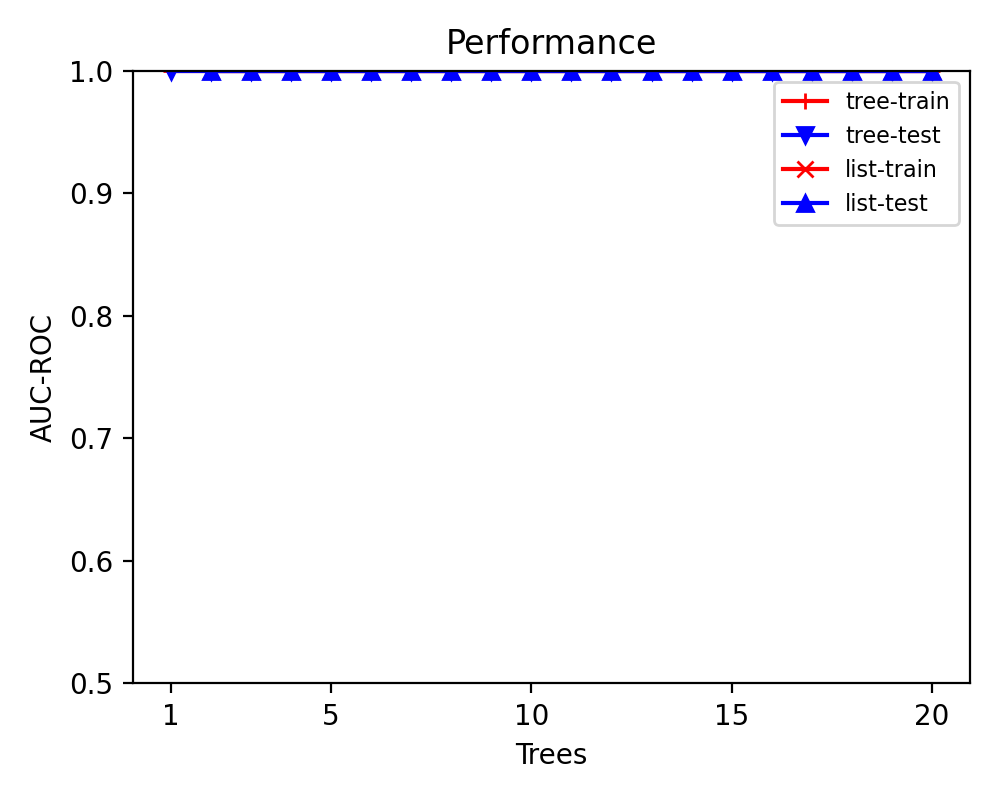}
    \label{fig:IMDB-workedunder-boost-roc}
  \end{subfigure}
  \hfill
  \begin{subfigure}[t]{0.49\columnwidth}
    \centering
    \includegraphics[width=1.\columnwidth]{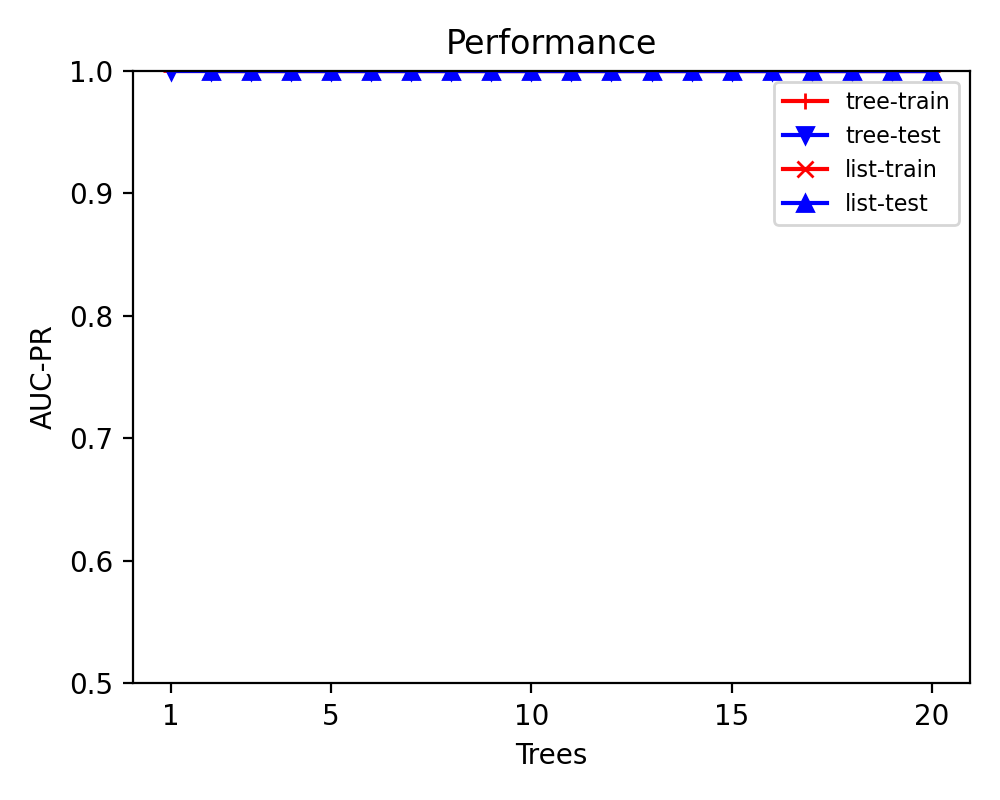}
    \label{fig:IMDB-workedunder-boost-pr}
  \end{subfigure}
  \caption{IMDB WorkedUnder with boosting}
 \label{fig:IMDB-workedunder-boost}
\end{figure*}

\begin{figure*}[h]
  \centering
  \begin{subfigure}[t]{0.49\columnwidth}
    \centering
    \includegraphics[width=1.\columnwidth]{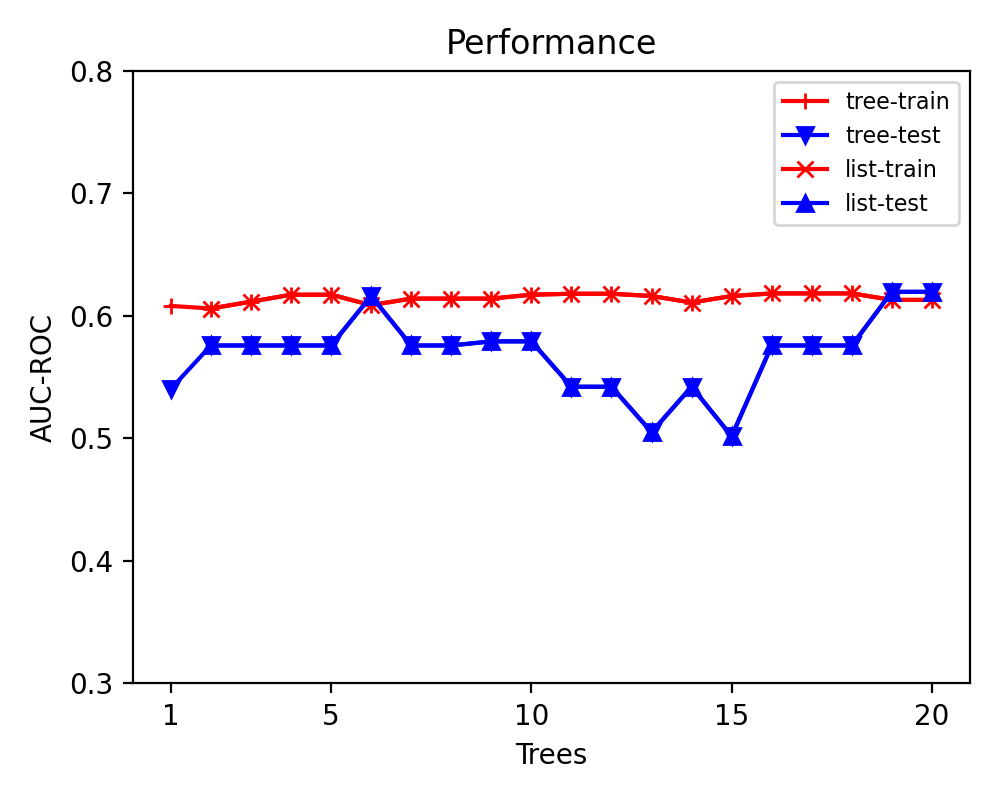}
    \label{fig:IMDB-genre-boost-roc}
  \end{subfigure}
  \hfill
  \begin{subfigure}[t]{0.49\columnwidth}
    \centering
    \includegraphics[width=1.\columnwidth]{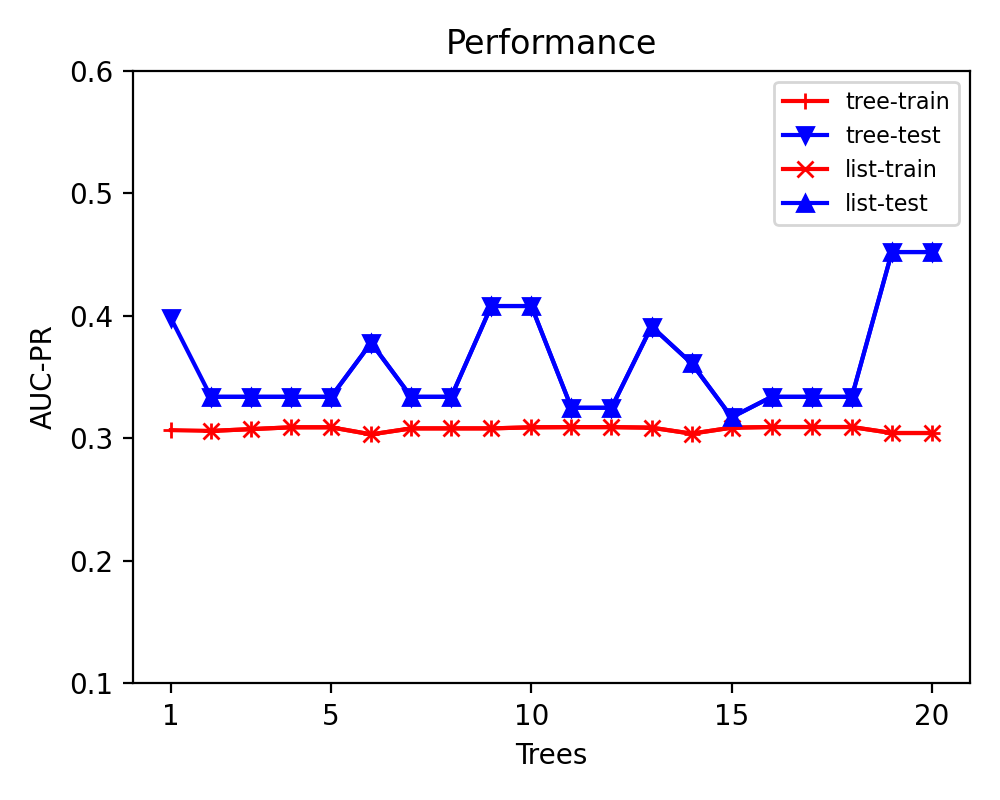}
    \label{fig:IMDB-genre-boost-pr}
  \end{subfigure}
  \caption{IMDB Genre with boosting}
 \label{fig:IMDB-genre-boost}
\end{figure*}

\begin{figure*}[h]
  \centering
  \begin{subfigure}[t]{0.49\columnwidth}
    \centering
    \includegraphics[width=1.\columnwidth]{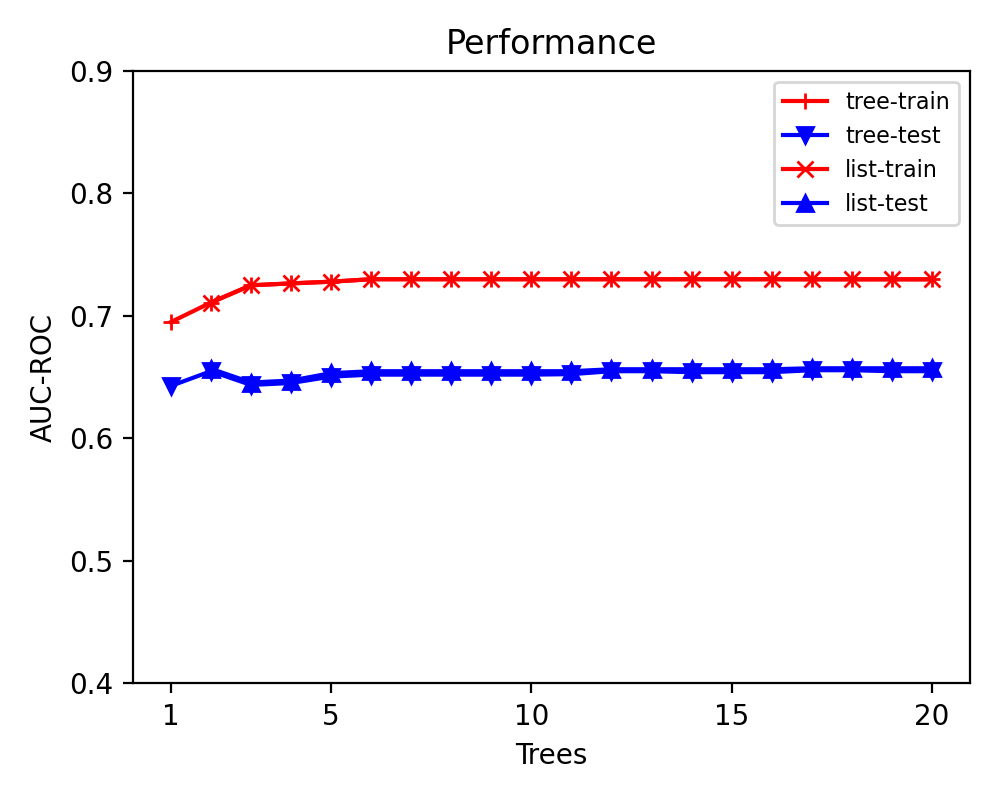}
    \label{fig:IMDB-female_gender-boost-roc}
  \end{subfigure}
  \hfill
  \begin{subfigure}[t]{0.49\columnwidth}
    \centering
    \includegraphics[width=1.\columnwidth]{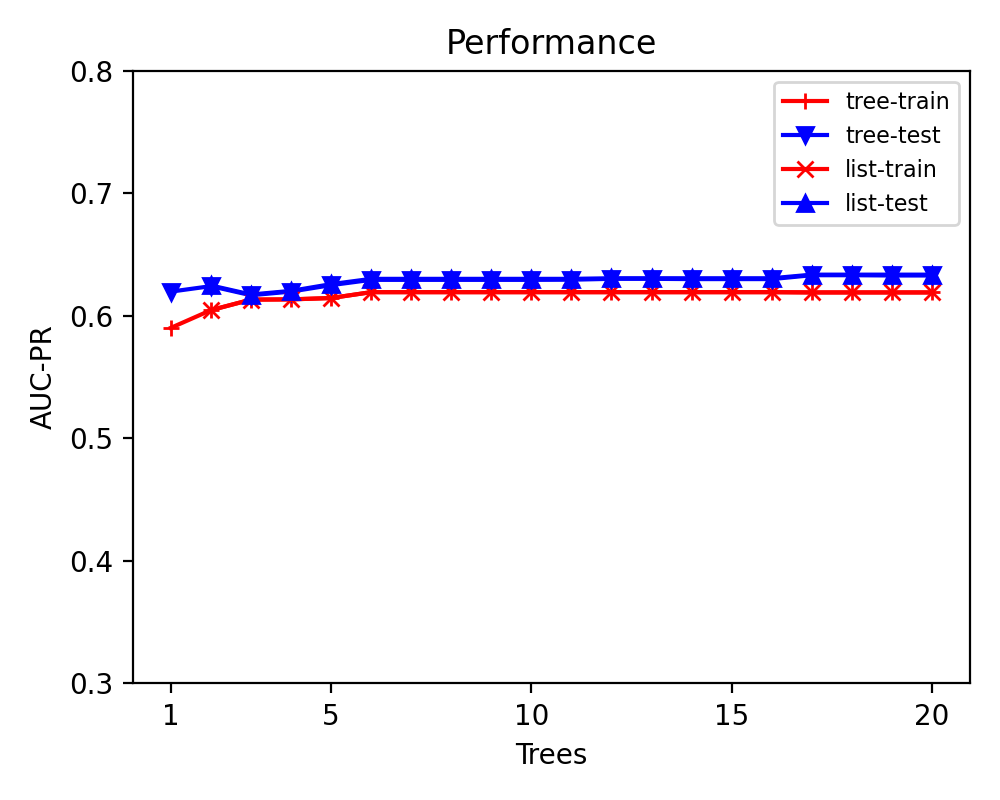}
    \label{fig:IMDB-female_gender-boost-pr}
  \end{subfigure}
  \caption{IMDB FemaleGender with boosting}
 \label{fig:IMDB-female_gender-boost}
\end{figure*}

\begin{figure*}[h]
  \centering
  \begin{subfigure}[t]{0.49\columnwidth}
    \centering
    \includegraphics[width=1.\columnwidth]{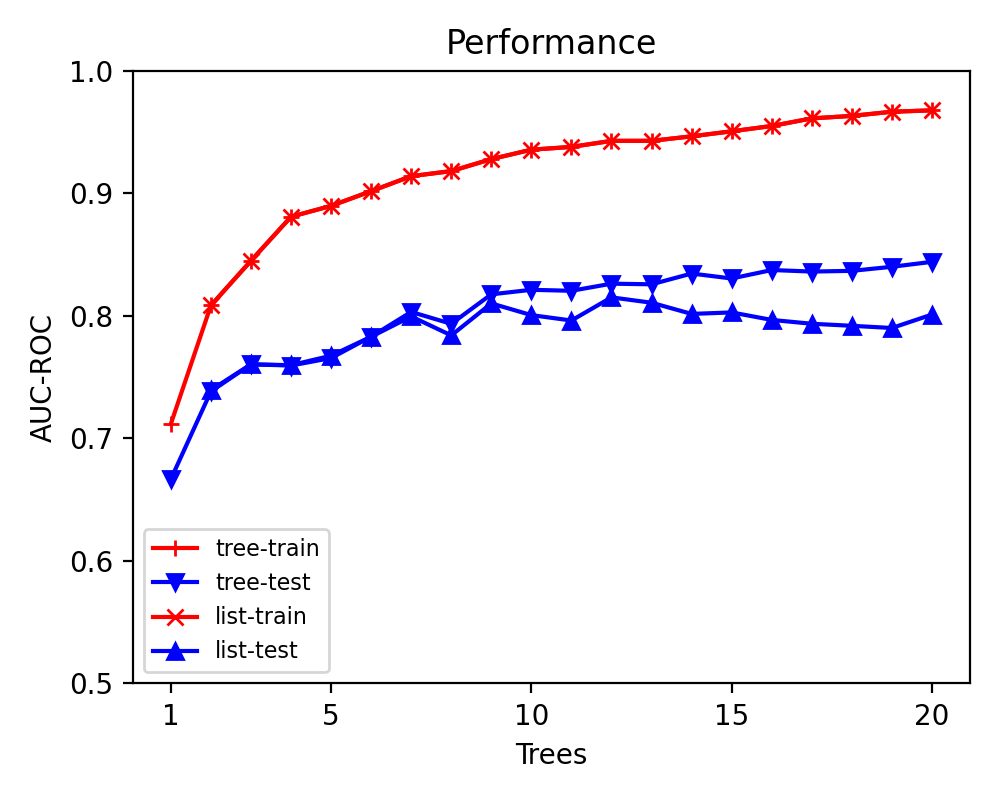}
    \label{fig:ICML-coauthor-boost-roc}
  \end{subfigure}
  \hfill
  \begin{subfigure}[t]{0.49\columnwidth}
    \centering
    \includegraphics[width=1.\columnwidth]{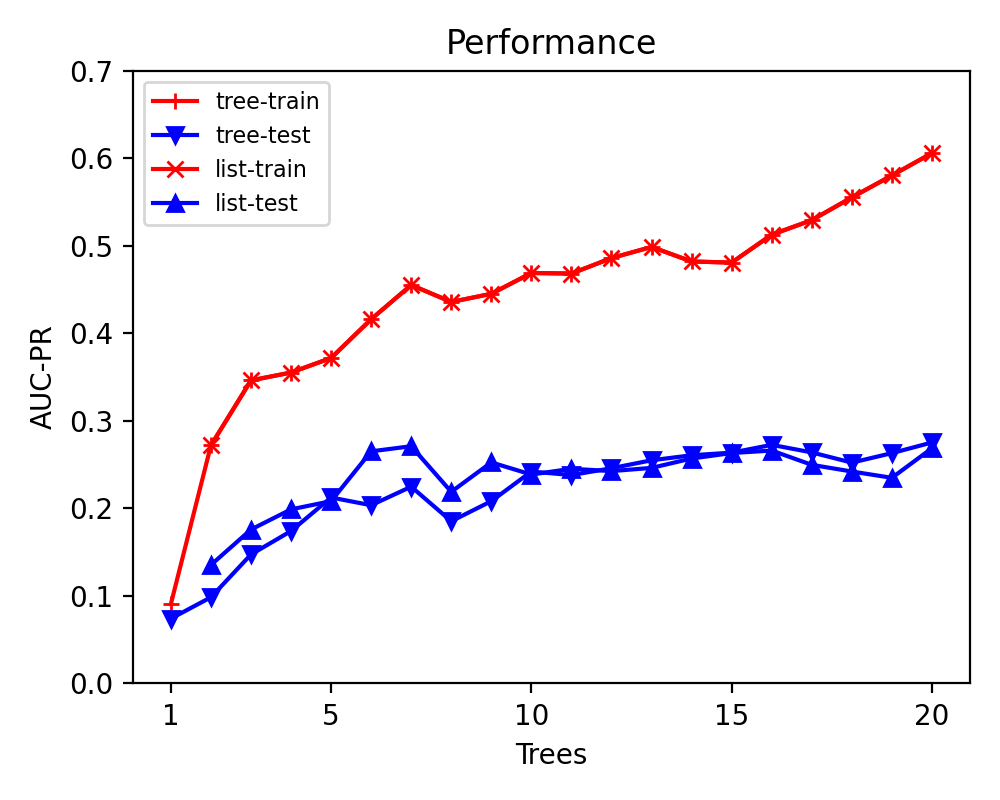}
    \label{fig:ICML-coauthor-boost-pr}
  \end{subfigure}
  \caption{ICML CoAuthor with boosting}
 \label{fig:ICML-coauthor-boost}
\end{figure*}

\end{appendices}

\end{document}